\def\onedot{$\mathsurround0pt\ldotp$}
\def\eg{\emph{e.g}\onedot, }
\def\ie{\emph{i.e}\onedot, }
\def\Figref#1{Figure~\ref{#1}}
\def\Secref#1{Section~\ref{#1}}
\def\eqref#1{equation~\ref{#1}}
\def\Eqref#1{Equation~\ref{#1}}
\def\1{\bm{1}}
\def\rvf{{\mathbf{f}}}
\def\rvs{{\mathbf{s}}}
\def\rvv{{\mathbf{v}}}
\def\rvw{{\mathbf{w}}}
\def\rvx{{\mathbf{x}}}
\def\rvy{{\mathbf{y}}}
\def\vmu{{\bm{\mu}}}
\def\mI{{\bm{I}}}
\DeclareMathAlphabet{\mathsfit}{\encodingdefault}{\sfdefault}{m}{sl}
\SetMathAlphabet{\mathsfit}{bold}{\encodingdefault}{\sfdefault}{bx}{n}
\def\gB{{\mathcal{B}}}
\def\gD{{\mathcal{D}}}
\def\gL{{\mathcal{L}}}
\def\gN{{\mathcal{N}}}
\def\gU{{\mathcal{U}}}
\newcommand{\E}{\mathbb{E}}
\newcommand{\R}{\mathbb{R}}
\newcommand{\Cov}{\mathrm{Cov}}
\DeclareMathOperator{\Tr}{Tr}
\newcommand\numberthis{\addtocounter{equation}{1}\tag{\theequation}}
\newcommand{\tx}{\rvx(t)}
\definecolor{myblue}{RGB}{8, 150, 251}
\definecolor{myred}{RGB}{255, 92, 75}
\definecolor{mygreen}{RGB}{0, 248, 36}
\definecolor{myyellow}{RGB}{255, 213, 61}
\definecolor{myre}{RGB}{0, 0, 0}
\def\setstretch#1{\renewcommand{\baselinestretch}{#1}}
\title{Stable target field for reduced variance score estimation in diffusion models}
\author{Yilun Xu\thanks{Equal Contribution.} , Shangyuan Tong\footnotemark[1] , Tommi Jaakkola\\
Computer Science and Artificial Intelligence Lab,\\ Massachusetts Institute of Technology\\
\texttt{ylxu@mit.edu};\quad \texttt{\{sytong, tommi\}@csail.mit.edu} \\
}
\begin{document}

\maketitle

\begin{abstract}

Diffusion models generate samples by reversing a fixed forward diffusion process. {Despite already providing impressive empirical results, these diffusion models algorithms can be further improved by reducing the variance of the training targets in their denoising score-matching objective.} We argue that the source of such variance lies in the handling of intermediate noise-variance scales, where multiple modes in the data affect the direction of reverse paths. We propose to remedy the problem by incorporating {a reference batch} which we use to calculate weighted conditional scores as more stable training targets. We show that the procedure indeed helps in the challenging intermediate regime by reducing (the trace of) the covariance of training targets. The new stable targets can be seen as trading bias for reduced variance, where the bias vanishes with increasing reference batch size. Empirically, we show that the new objective improves the image quality{, stability, and training speed} of {various popular} diffusion models across datasets with both general ODE and SDE solvers. When used in combination with EDM~\citep{karras2022elucidating}, our method yields a current SOTA FID of 1.90 with 35 network evaluations
on the unconditional CIFAR-10 generation task. The code is available at \url{https://github.com/Newbeeer/stf}

\end{abstract}

\section{Introduction}
\label{sec:intro}

Diffusion models~\citep{sohl2015deep,song2019generative,ho2020denoising} have recently achieved impressive results on a wide spectrum of generative tasks, such as image generation~\citep{Nichol2022GLIDETP, song2021scorebased}, 3D point cloud generation~\citep{Luo2021DiffusionPM} and molecular conformer generation~\citep{Shi2021LearningGF, Xu2022GeoDiffAG}. These models can be subsumed under a unified framework in the form of Itô stochastic differential equations~(SDE)~\citep{song2021scorebased}. The models learn time-dependent score fields via score-matching~\citep{hyvarinen2005estimation}, which then guides the reverse SDE during generative sampling. {Popular instances of diffusion models include variance-exploding~(VE) and variance-preserving~(VP) SDE \citep{song2021scorebased}. Building on these formulations, EDM \citep{karras2022elucidating} provides the best performance to date.}




We argue that, despite achieving impressive empirical results, the current training scheme of diffusion models can be further improved. In particular, the variance of training targets in the denoising score-matching~(\textbf{DSM}) objective can be large and lead to suboptimal performance. 
To better understand the origin of this instability, we decompose the score field into three regimes. Our analysis shows that the phenomenon arises primarily 
in the intermediate regime, which is characterized by multiple modes or data points exerting comparable influences on the scores. In other words, in this regime, the sources of the noisy examples generated in the course of the forward process become ambiguous. We illustrate the problem in \Figref{fig:demo_1}, where each stochastic update of the score model is based on disparate targets.

We propose a generalized version of the denoising score-matching objective, termed the \emph{Stable Target Field}~(\textbf{STF}) objective. The idea is to include an additional \emph{reference batch} of examples that are used to calculate weighted conditional scores as targets. We apply self-normalized importance sampling to aggregate the contribution of each example in the reference batch. Although this process can substantially reduce the variance of training targets (\Figref{fig:demo_2}), especially in the intermediate regime, it does introduce some bias. However, we show that the bias together with the trace-of-covariance of the STF training targets shrinks to zero as we increase the size of the reference batch.

{Experimentally, we show that our STF objective achieves new state-of-the-art performance on CIFAR-10 unconditional generation when incorporated into EDM~\citep{karras2022elucidating}. The resulting FID score~\citep{Heusel2017GANsTB} is $1.90$ with $35$ network evaluations. STF also improves the FID/Inception scores for other variants of score-based models, \ie VE and VP SDEs~\citep{song2021scorebased}, in most cases. In addition, it enhances the stability of converged score-based models on CIFAR-10 and CelebA $64^2$ across random seeds, and helps avoid generating noisy images in VE. STF accelerates the training of score-based models~($3.6\times$ speed-up for VE on CIFAR-10) while obtaining comparable or better FID scores. To the best of our knowledge, STF is the first technique to accelerate the training process of diffusion models. We further demonstrate the performance gain with increasing reference batch size, highlighting the negative effect of large variance.}


Our contributions are summarized as follows:
\textbf{(1)} We detail the instability of the current diffusion models training objective in a principled and quantitative manner, characterizing a region in the forward process, termed \emph{the intermediate phase}, where the score-learning targets are most variable~(\Secref{sec:three}).
\textbf{(2)} We propose a generalized score-matching objective, \emph{stable target field}, which provides more stable training targets~(\Secref{sec:method}).
\textbf{(3)} We analyze the behavior of the new objective and prove that it is asymptotically unbiased and reduces the trace-of-covariance of the training targets by a factor pertaining to the reference batch size in the intermediate phase under mild conditions~(\Secref{sec:analysis}).
\textbf{(4)} We illustrate the theoretical arguments empirically and show that the proposed STF objective improves the performance, stability, and training speed of score-based methods. {In particular, it achieves the current state-of-the-art FID score on the CIFAR-10 benchmark when combined with EDM}~(\Secref{sec:exp}).

\begin{figure*}[t]
    \centering
    \subfigure[DSM]{\label{fig:demo_1}\includegraphics[width=0.4\textwidth, trim = 8cm 0.5cm 8cm 0.5cm, clip]{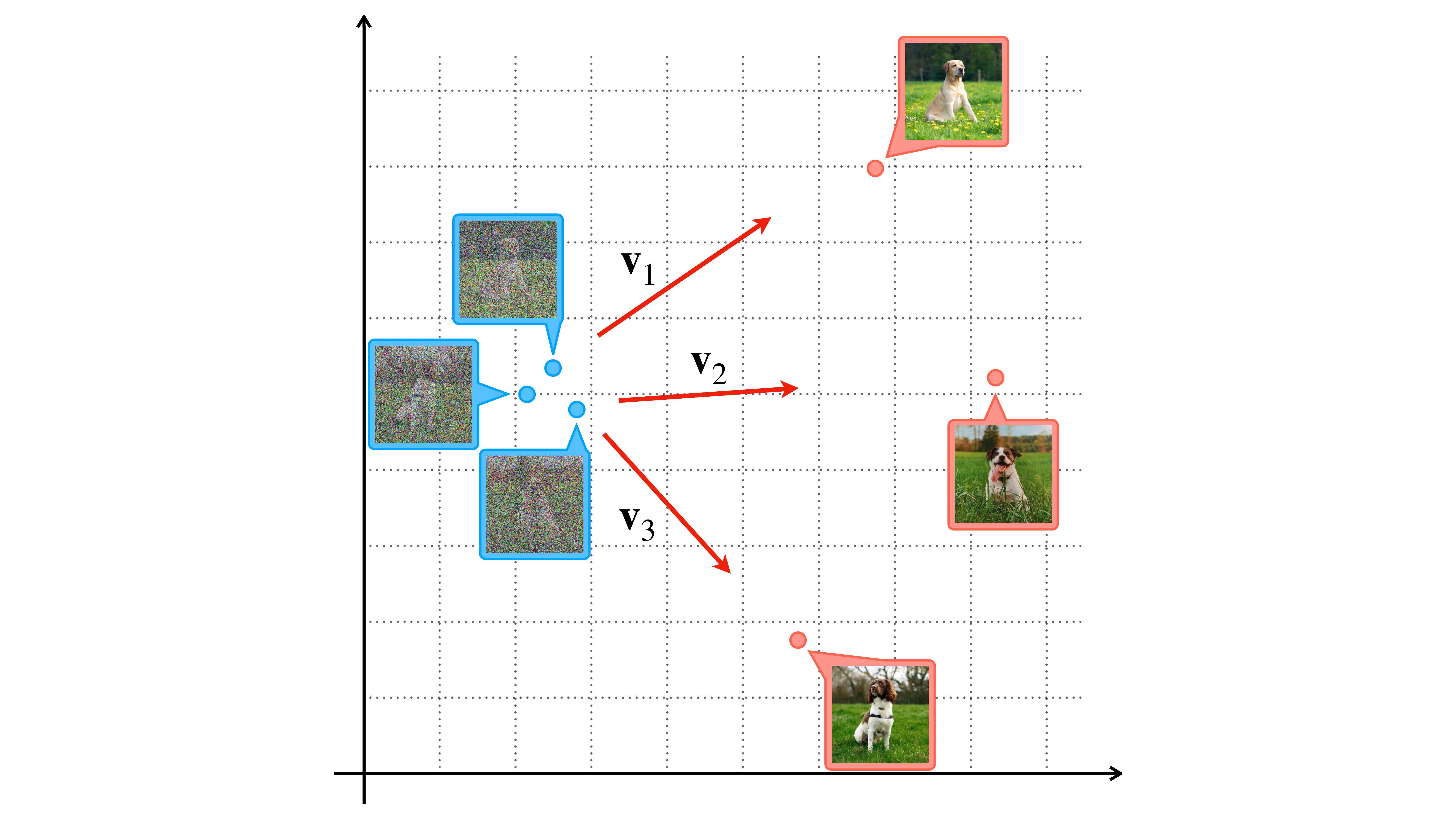}}
    \subfigure[STF]{\label{fig:demo_2}\includegraphics[width=0.4\textwidth, trim = 8cm 0.5cm 8cm 0.5cm, clip]{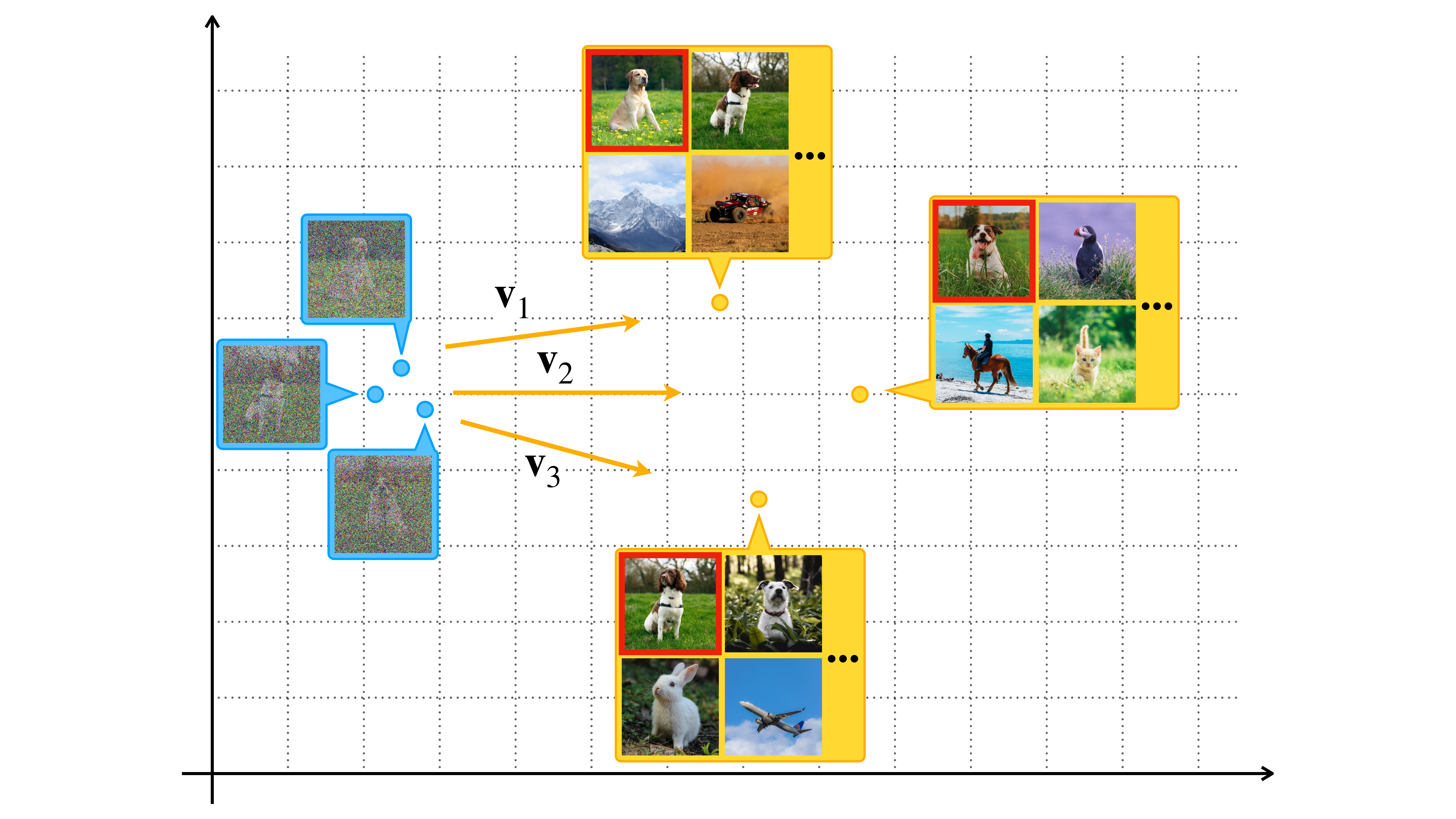}}
    \caption{Illustration of differences between the DSM objective and our proposed STF objective. The ``destroyed'' images (in \textcolor{myblue}{blue box}) are close to each other while their sources (in \textcolor{myred}{red box}) are not. Although the true score in expectation is the weighted average of $\rvv_i$, the individual training updates of the DSM objective have a high variance, which our STF objective reduces significantly by including a large reference batch (\textcolor{myyellow}{yellow box}).}
    \label{fig:demo}
\end{figure*}


{\section{Background on diffusion models}}
\label{sec:background}

In diffusion models, the forward process\footnote{For simplicity, we focus on the version where the diffusion coefficient $g(t)$ is independent of $\tx$.} is an SDE with no learned parameter, in the form of:
\begin{equation*}
    \mathrm{d}\rvx = \rvf(\rvx, t)\mathrm{d}t + g(t)\mathrm{d}\rvw,
\end{equation*}
where $\rvx\in\R^d$ with $\rvx(0)\sim p_0$ being the data distribution, $t\in[0, 1]$, $\rvf\colon\R^d\times[0, 1]\to\R^d$, $g\colon[0, 1]\to\R$, and $\rvw\in\R^d$ is the standard Wiener process. It gradually transforms the data distribution to a known prior as time goes from 0 to 1. Sampling of diffusion models is done via a corresponding reverse-time SDE \citep{anderson1982reverse}:
\begin{align*}
    \mathrm{d}\rvx &= \left[\rvf(\rvx, t) - g(t)^2\nabla_\rvx\log p_t(\rvx)\right]\mathrm{d}\bar{t} + g(t)\mathrm{d}\bar{\rvw},
\end{align*}
where $\bar{\cdot}$ denotes time traveling backward from 1 to 0. \citet{song2021scorebased} proposes a probability flow ODE that induces the same marginal distribution $p_t(\rvx)$ as the SDE: 
$
        \mathrm{d}\rvx = \left[\rvf(\rvx, t) - \frac{1}{2}g(t)^2\nabla_\rvx\log p_t(\rvx)\right]\mathrm{d}\bar{t}
$. 
Both formulations progressively recover $p_0$ from the prior $p_1$. We estimate the score of the transformed data distribution at time $t$, $\nabla_\rvx\log p_t(\rvx)$, via a neural network, $\rvs_\theta(\rvx, t)$. Specifically, the training objective is a weighted sum of the denoising score-matching~\citep{Vincent2011ACB}:
\begin{align*}
    \min_\theta \: \E_{t \sim q_t(t)} \lambda(t) \E_{\rvx\sim p_0} \E_{\tx\sim p_{t|0}(\cdot|\rvx)}\left[\| \rvs_\theta(\tx,t) - \nabla_{\tx} \log p_{t|0}(\tx|\rvx)\|_2^2\right],\numberthis \label{eq:sm}
\end{align*}
where $q_t$ is the distribution for time variable, \eg $\gU[0,1]$ for VE/VP~\citep{song2021scorebased} and a log-normal distribution for EDM~\cite{karras2022elucidating}, and $\lambda(t)=\sigma_t^2$ is the positive weighting function to keep the time-dependent loss at the same magnitude~\citep{song2021scorebased}, and $p_{t|0}(\tx|\rvx)$ is the transition kernel denoting the conditional distribution of $\tx$ given $\rvx$\footnote{We omit ``$(0)$'' from $\rvx(0)$ when there is no ambiguity.}. Specifically, diffusion models ``destroy'' data according to a diffusion process utilizing Gaussian transition kernels, which result in $p_{t|0}(\tx|\rvx) = \gN(\vmu_t, \sigma_t^2\mI)$. Recent works~\citep{xu2022poisson,Rissanen2022GenerativeMW} have also extended the underlying principle from the diffusion process to more general physical processes where the training objective is not necessarily score-related.


\section{Understanding the training target in score-matching objective}
\label{sec:three}


The vanilla denoising score-matching objective at time $t$ is:
\begin{align*}
   \ell_{\textrm{DSM}}(\theta, t) = \E_{p_0(\rvx)} \E_{p_{t|0}(\rvx(t)|\rvx)}[\| \rvs_\theta(\rvx(t),t) - \nabla_{\rvx(t)} \log p_{t|0}(\rvx(t)|\rvx)\|_2^2],\numberthis \label{eq:tsm}
\end{align*}
where the network is trained to fit the individual targets $\nabla_{\rvx(t)} \log p_{t|0}(\rvx(t)|\rvx)$ at $(\tx, t)$ -- the ``influence" exerted by clean data $\rvx$ on $\rvx(t)$. We can swap the order of the sampling process 
by first sampling $\tx$ from $p_t$ and then $\rvx$ from $p_{0|t}(\cdot|\tx)$. 
Thus, $\rvs_\theta$ has a closed form minimizer:
\begin{equation}
\label{eq:s_v^*}
    \rvs^*_\textrm{DSM}(\rvx(t),t) = \E_{p_{0|t}(\rvx|\rvx(t))}[\nabla_{\rvx(t)} \log p_{t|0}(\rvx(t)|\rvx)] = \nabla_{\rvx(t)} \log p_{t}(\rvx(t)).
\end{equation}
The score field is  
a conditional expectation of 
$\nabla_{\tx}\log p_{t|0}(\tx|\rvx)$ with respect to the posterior distribution $p_{0|t}$. 
In practice, a Monte Carlo estimate of this target can have high variance {\color{myre}\citep{mcbook, Elvira2021AdvancesII}}. 
In particular, when multiple modes of the data distribution have comparable influences on $\rvx(t)$,  $p_{0|t}(\cdot|\rvx(t))$ is a multi-mode distribution, {as also observed in \citet{xiao2022tackling}}. Thus the targets $\nabla_{\rvx(t)} \log p_{t|0}(\rvx(t)|\rvx)$ vary considerably across different $\rvx$ and this can strongly affect the estimated score at $(\tx,t)$, resulting in slower convergence and worse performance in practical stochastic gradient optimization~\citep{Wang2013VarianceRF}.




To quantitatively characterize the variations of individual targets at different time, we propose a metric -- the average trace-of-covariance of training targets at time $t$:
\begin{align*}
    V_\textrm{DSM}(t) &= \E_{p_t(\tx)}\left[\Tr(\Cov_{p_{0|t}(\rvx|\tx)}(\nabla_{\tx} \log p_{t|0}(\tx|\rvx)))\right]\\
    &= \E_{p_t(\tx)}\E_{p_{0|t}(\rvx|\tx)}\left[\|\nabla_{\tx} \log p_{t|0}(\tx|\rvx)) - \nabla_{\tx} \log p_{t}(\tx)\|_2^2\right].\numberthis \label{eq:vt}
\end{align*}

\begin{wrapfigure}{r}{0.5\textwidth}
    \centering
    \vspace{-20pt}
\subfigure[ODE Sampling]{\label{fig:3p-sampling}\includegraphics[width=0.25\textwidth, trim = 10cm 0cm 10cm 0cm]{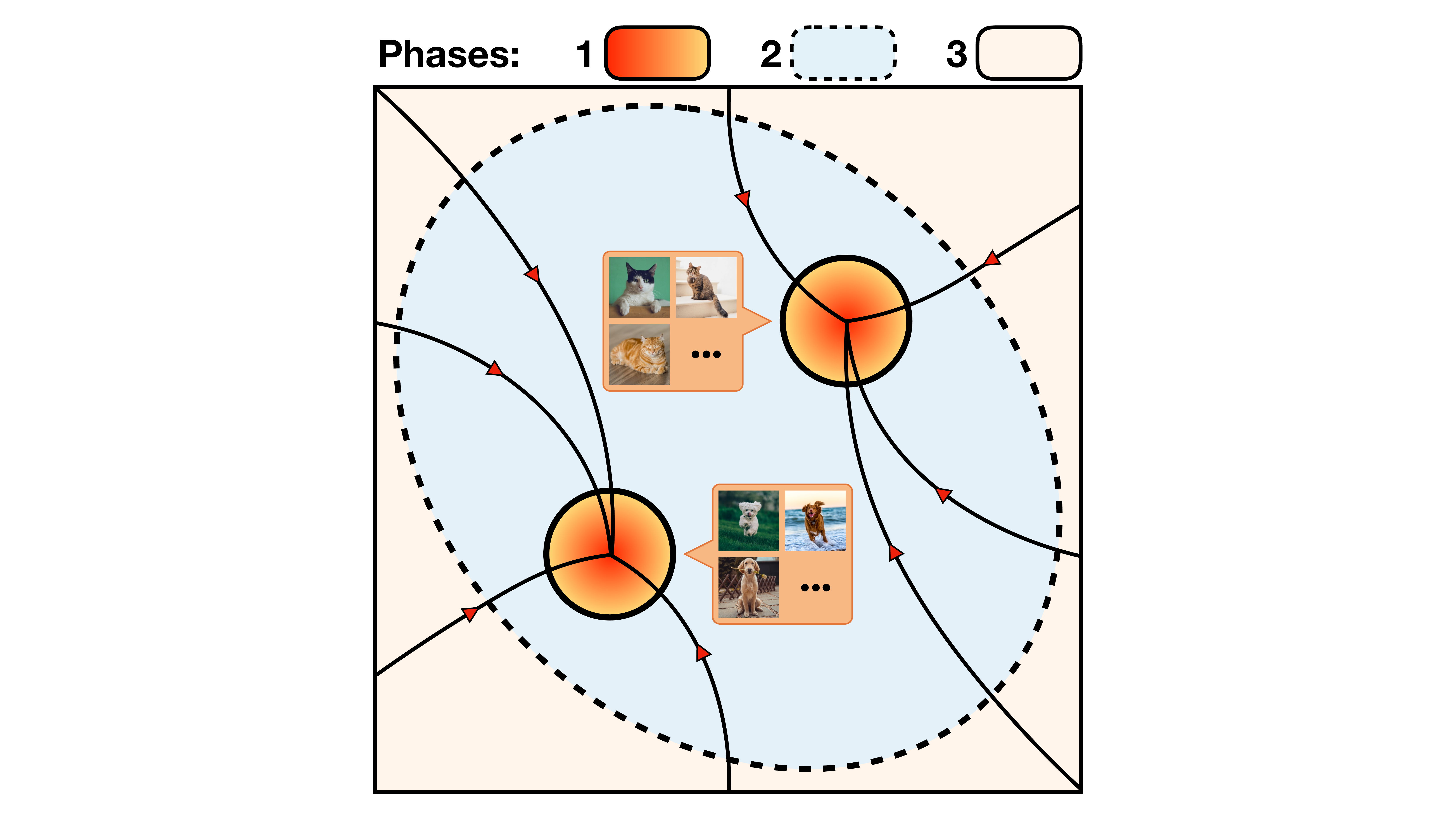}}
\subfigure[$V_\textrm{DSM}(t)$ versus $t$]{\label{fig:3p-vt}\includegraphics[width=0.23\textwidth, trim = 1cm 1cm 1cm 0cm]{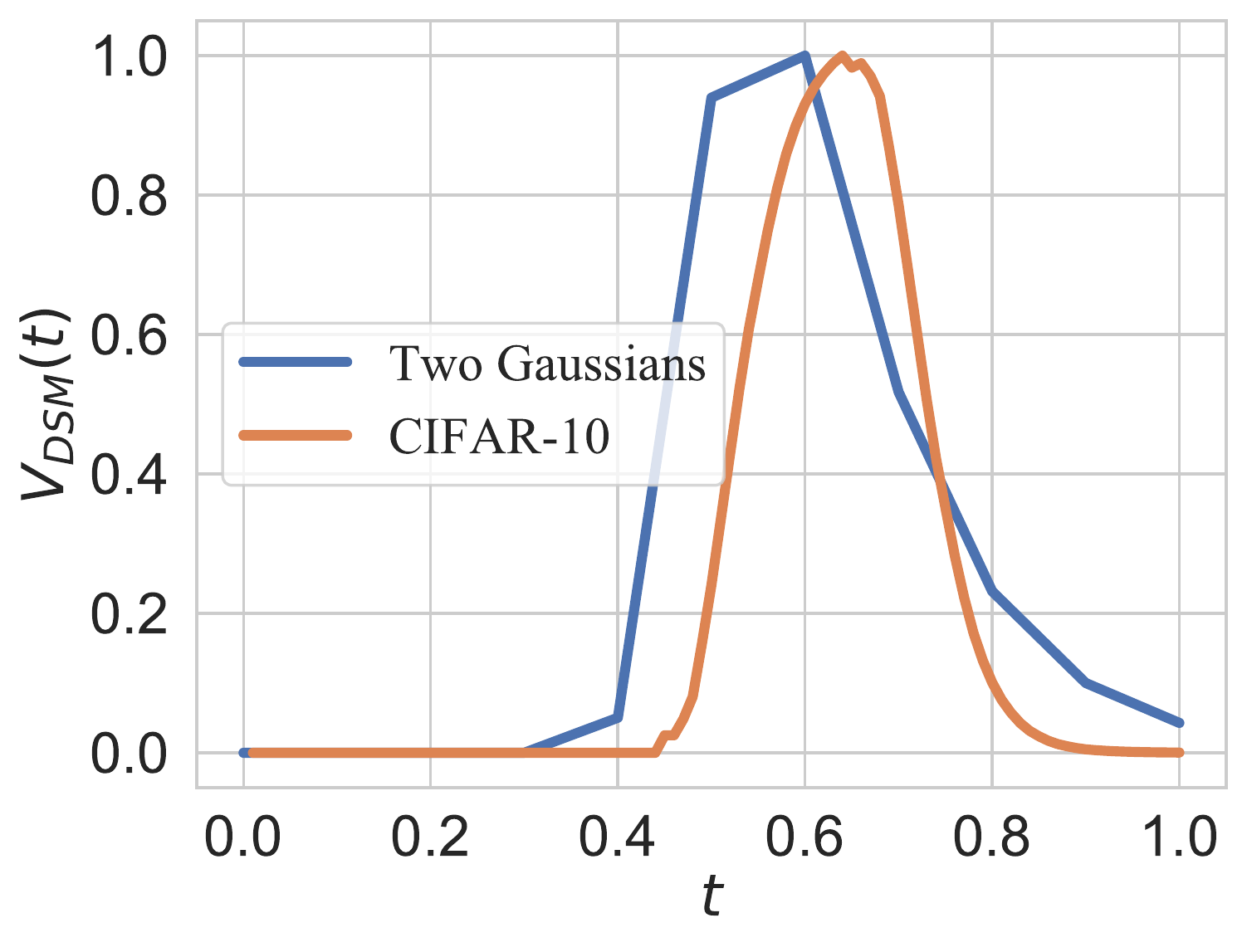}}
    \vspace{-5pt}
    \caption{\textbf{(a)}: Illustration of the three phases in a two-mode distribution. \textbf{(b)}: Estimated $V_\textrm{DSM}(t)$ for two distributions. We normalize the maximum value to 1 for illustration purposes.}
    \label{fig:demo-phase}
        \vspace{-20pt}
\end{wrapfigure}

We use $V_\textrm{DSM}(t)$ to define three successive phases relating to the behavior of training targets. As shown in \Figref{fig:3p-sampling}, the three phases partition the score field into near, intermediate, and far regimes~(Phase 1$\sim$3 respectively). Intuitively, $V_\textrm{DSM}(t)$ peaks in the intermediate phase (Phase 2), where multiple distant modes in the data distribution have comparable influences on the same noisy perturbations, resulting in unstable targets. In Phase 1, the posterior $p_{0|t}$ concentrates around one single mode, thus low variation. In Phase 3, the targets remain similar across modes since $\lim_{t\to 1}p_{t|0}(\tx|\rvx)\approx p_1$ for commonly used transition kernels.

We validate this argument empirically in \Figref{fig:3p-vt}, which shows the estimated $V_\textrm{DSM}(t)$ 
for a mixture of two Gaussians as well as a subset of CIFAR-10 dataset~\citep{krizhevsky2009learning} for a more realistic setting. Here we use VE SDE, \ie 
$p_{t|0}(\tx|\rvx) = \gN\left(\rvx,\sigma_m^2(\frac{\sigma_M}{\sigma_m})^{2t}\mI\right)$ for some $\sigma_m$ and $\sigma_M$~\citep{song2021scorebased}. $V_\textrm{DSM}(t)$ exhibits similar phase behavior across $t$ in both toy and realistic cases. Moreover, $V_\textrm{DSM}(t)$ reaches its maximum value in the intermediate phase, demonstrating the large variations of individual targets. We defer more details to Appendix~\ref{app:vt}.

\section{Treating score as a field}
\label{sec:method}

The vanilla denoising score-matching approach (\Eqref{eq:s_v^*}) can {\color{myre}be} viewed as a Monte Carlo estimator, \ie $\nabla_{\rvx(t)}\log p_{t}(\tx) = \E_{p_{0|t}(\rvx|\rvx(t))}[\nabla_{\rvx(t)} \log p_{t|0}(\rvx(t)|\rvx)] \approx \frac{1}{n }\sum_{i=1}^n \nabla_{\rvx(t)}\log p_{t|0}(\tx|\rvx_i)$ where $\rvx_i$ is sampled from $p_{0|t}(\cdot|\tx)$ and $n=1$. 
The variance of a Monte Carlo estimator is proportional to $\frac{1}{n}$, so we propose to use a larger batch~($n$) to counter the high variance problem described in \Secref{sec:three}.
Since sampling directly from the posterior $p_{0|t}$ is not practical, we first apply importance sampling {with the proposal distribution $p_0$. Specifically, we sample} a large reference batch $\gB_L = \{\rvx_i\}_{i=1}^n \sim p_0^n$ and get the following approximation:
\begin{align*}
    \nabla_{\rvx(t)}\log p_{t}(\tx) \approx \frac{1}{n }\sum_{i=1}^n \frac{p_{0|t}(\rvx_i|\rvx(t))}{p_0(\rvx_i)}\nabla_{\rvx(t)}\log p_{t|0}(\rvx(t)|\rvx_i).
\end{align*}
The importance weights can be rewritten as ${p_{0|t}(\rvx|\rvx(t))}/{p_0(\rvx)} = {p_{t|0}(\rvx(t)|\rvx)}/{p_t(\rvx(t))}$. However, this basic importance sampling estimator has two issues. The weights now involve an unknown normalization factor $p_t(\rvx(t))$ and the ratio between the prior and posterior distribution can be large in high dimensional spaces. To remedy these problems, we appeal to self-normalization techniques~{\color{myre}\citep{Hesterberg1995WeightedAI}} to further stabilize the training targets:
\begin{align*}
    \nabla_{\rvx(t)}\log p_{t}(\tx) \approx\sum_{i=1}^n \frac{p_{t|0}(\tx|\rvx_i)}{\sum_{j=1}^n p_{t|0}(\tx|\rvx_j)}\nabla_{\rvx(t)}\log p_{t|0}(\tx|\rvx_i).\numberthis \label{eq:stf}
\end{align*}
We term this new training target in \Eqref{eq:stf}  as \textit{Stable Target Field~(STF)}. In practice, we sample the reference batch $\gB_L=\{\rvx_i\}_{i=1}^n$ from $p_0^{n}$ and obtain $\rvx(t)$ by applying the transition kernel to the ``first'' training data $\rvx_1$.
Taken together, the new STF objective becomes:
\begin{multline}
\label{eq:new-tsm}
    \ell_{\textrm{STF}}(\theta, t)=
    \E_{\{\rvx_i\}_{i=1}^{n} \sim p_0^{n}}\E_{\rvx(t)\sim p_{t|0}(\cdot|\rvx_1)}\\
    \left[\Big\Vert \rvs_\theta(\rvx(t),t) - {\sum_{k=1}^n \frac{p_{t|0}(\tx|\rvx_k)}{\sum_{j=1}^n p_{t|0}(\tx|\rvx_j)} }\nabla_{\rvx(t)}\log p_{t|0}(\tx|\rvx_k)\Big\Vert_2^2\right]. 
\end{multline}
When $n=1$, STF reduces to the vanilla denoising score-matching~(\Eqref{eq:tsm}). 
When $n>1$, STF 
incorporates a reference batch to stabilize training targets. 
Intuitively, the new weighted target 
assigns larger weights to clean data with higher influence on $\tx$, \ie higher transition probability $p_{t|0}(\rvx(t)|\rvx)$.

Similar to our analysis in \Secref{sec:three}, we can again swap the sampling process in \Eqref{eq:new-tsm} so that, for a perturbation $\tx$, we sample the reference batch $\gB_L=\{\rvx_i\}_{i=1}^n$ from $p_{0|t}(\cdot|\rvx(t))p_0^{n-1}$, where the first element  involves the posterior, and the rest follow the data distribution. 
Thus, the minimizer of the new objective~(\Eqref{eq:new-tsm}) is {(derivation can be found in Appendix~\ref{app:derivation_stf_opt})}
\begin{align*}
    \rvs_{\textrm{STF}}^*(\tx,t) = \E_{\rvx_1 \sim p_{0|t}(\cdot|\tx)}\E_{\{\rvx_i\}_{i=2}^{n} \sim p_0^{n-1}}\left[{\sum_{k=1}^n \frac{p_{t|0}(\tx|\rvx_k)}{\sum_{j} p_{t|0}(\tx|\rvx_j)} }\nabla_{\tx}\log p_{t|0}(\tx|\rvx_k)\right].\numberthis \label{eq:s_w_star}
\end{align*}
Note that although STF significantly reduces the variance, it introduces bias: the minimizer is no longer the true score. Nevertheless, in \Secref{sec:analysis}, we show that the bias converges to $0$ as $n \to \infty$, while reducing the trace-of-covariance of the training targets by a factor of $n$ when $p_{0|t} \approx p_0$. {We further instantiate the STF objective (\Eqref{eq:new-tsm}) with transition kernels in the form of $p_{t|0}(\tx|\rvx) = \gN(\rvx, \sigma_t^2\mI)$, which includes EDM \citep{karras2022elucidating}, VP (through reparameterization) and VE~\citep{song2021scorebased}:}
\begin{align*}
    \E_{\rvx_1 \sim p_{0|t}(\cdot|\tx)}\E_{\{\rvx_i\}_{i=2}^{n} \sim p_0^{n-1}}\left[\bigg\Vert \rvs_\theta(\rvx(t),t) - \frac{1}{\sigma_t^2}{\sum_{k=1}^n \frac{\exp\left(-\frac{\| \rvx(t)-\rvx_k\|_2^2}{2\sigma_t^2}\right)}{\sum_j \exp\left(-\frac{\| \rvx(t)-\rvx_j\|_2^2}{2\sigma_t^2}\right)} }(\rvx_k - \rvx(t))\bigg\Vert_2^2\right].
\end{align*}
To aggregate the time-dependent STF objective over $t$, we sample the time variable $t$ from the training distribution $q_t$ and apply the weighting function $\lambda(t)$. Together, the final training objective for STF is $  \E_{t \sim q_t(t)} \left[\lambda(t) \ell_{\textrm{STF}}(\theta, t)\right]$. We summarize the training process in Algorithm~\ref{alg:tf}. The small batch size $|\gB|$ is the same as the normal batch size in the vanilla training process. {We defer specific use cases of STF objectives combined with various popular diffusion models to Appendix~\ref{app:stf_obj}.}

\begin{algorithm}[htb]
  \caption{Learning the stable target field}
  \label{alg:tf}
\begin{algorithmic}
  \STATE {\bfseries Input:} Training iteration $T$, Initial model $\rvs_\theta$, dataset $\gD$, learning rate $\eta$.
  \FOR{$t=1 \dots T$}
  \STATE Sample a large reference batch $\gB_L$ from $\gD$, and subsample a small batch $\gB = \{\rvx_i\}_{i=1}^{|\gB|}$ from $\gB_L$
  \STATE Uniformly sample the time $\{t_i\}_{i=1}^{|\gB|} \sim q_t(t)^{|\gB|}$
  \STATE Obtain the batch of perturbed samples $\{\rvx_i(t_i)\}_{i=1}^{|\gB|}$ by applying the transition kernel $p_{t|0}$ on $\gB$
      \STATE Calculate the stable target field of $\gB_L$ for all $\rvx_i(t_i)$: 
      \STATE \qquad $\rvv_{\gB_L}(\rvx_i(t_i)) = {\sum_{\rvx \in \gB_L} \frac{p_{t_i|0}(\rvx_i(t_i)|\rvx)}{\sum_{\rvy \in \gB_L} p_{t_i|0}(\rvx_i(t_i)|\rvy)} }\nabla_{\rvx_i(t_i)}\log p_{t_i|0}(\rvx_i(t_i)|\rvx)$ 
      \STATE Calculate the loss: $\gL(\theta) = \frac{1}{|\gB|}\sum_{i=1}^{|\gB|} \lambda(t_i)\| \rvs_\theta(\rvx_i(t_i), t_i) - \rvv_{\gB_L}(\rvx_i(t_i)) \|_2^2$
      \STATE Update the model parameter: $\theta = \theta - \eta \nabla \gL(\theta)$
  \ENDFOR
  \RETURN $\rvs_\theta$
\end{algorithmic}
\end{algorithm}

\vspace{-5pt}
\section{Analysis}
\label{sec:analysis}

In this section, we analyze the theoretical properties of our approach. In particular, we show that the new minimizer $\rvs_{\textrm{STF}}^*(\tx,t)$ (\Eqref{eq:s_w_star}) converges to the true score asymptotically~(\Secref{sec:bias}). Then, we show that the proposed STF reduces the trace-of-covariance {of training targets propositional to} the reference batch size in the intermediate phase, with mild conditions~(\Secref{sec:var}).
\subsection{Asymptotic behavior}
\label{sec:bias}

Although in general $\rvs_{\textrm{STF}}^*(\tx,t) \not= \nabla_{\tx} \log p_{t}(\tx)$, the bias shrinks toward $0$ with a increasing $n$. In the following theorem we show that the minimizer of STF objective at $(\tx,t)$, \ie $\rvs_{\textrm{STF}}^*(\tx,t)$, is asymptotically normal when $n \to \infty$.
\begin{restatable}{theorem}{thmbias}\label{thm:bias}
Suppose $\forall t \in [0,1], 0<\sigma_t<\infty$, then 
\begin{align*}
    \sqrt{n}\left(\rvs_{\textrm{STF}}^*(\tx,t)-\nabla_{\tx} \log p_t(\tx)\right)\xrightarrow{d}\gN\left(\bm{0}, \frac{\Cov(\nabla_{\tx} p_{t|0}(\tx|\rvx))}{p_t(\tx)^2}\right)\numberthis \label{eq:asym}
\end{align*}
\end{restatable}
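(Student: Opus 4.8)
The plan is to recognize the (random) STF target as a self-normalized importance-sampling (SNIS) estimator of the true score and to establish its asymptotic normality via the multivariate delta method. The crucial first step is an algebraic simplification: substituting $\nabla_{\tx}\log p_{t|0}(\tx|\rvx_k)=\nabla_{\tx} p_{t|0}(\tx|\rvx_k)/p_{t|0}(\tx|\rvx_k)$ into the self-normalized weights cancels the factor $p_{t|0}(\tx|\rvx_k)$, so the target collapses to a ratio of two empirical means,
\[
\hat{\rvs}_n(\tx)=\frac{\frac{1}{n}\sum_{k=1}^n \nabla_{\tx} p_{t|0}(\tx|\rvx_k)}{\frac{1}{n}\sum_{j=1}^n p_{t|0}(\tx|\rvx_j)}=:\frac{A_n}{B_n}.
\]
Since $\E_{p_0}[\nabla_{\tx} p_{t|0}(\tx|\rvx)]=\nabla_{\tx} p_t(\tx)$ and $\E_{p_0}[p_{t|0}(\tx|\rvx)]=p_t(\tx)$, this is exactly an SNIS estimator of $\nabla_{\tx}\log p_t(\tx)=\nabla_{\tx} p_t(\tx)/p_t(\tx)$, which I read as the random quantity whose expectation is the minimizer $\rvs_{\textrm{STF}}^*(\tx,t)$.

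Next I would pass to the limit. The reference batch in \eqref{eq:s_w_star} is not i.i.d.\ — the first sample is drawn from the posterior $p_{0|t}(\cdot|\tx)$ while the remaining $n-1$ come from $p_0$ — so I would first argue that the single posterior-distributed sample is asymptotically negligible: it perturbs each of $A_n$ and $B_n$ by $O(1/n)$, hence by $O(1/\sqrt{n})$ after the $\sqrt{n}$ scaling, so it does not affect the limiting law, and all $n$ summands may be treated as i.i.d.\ draws from $p_0$. The hypothesis $0<\sigma_t<\infty$ is precisely what guarantees that the Gaussian kernel $p_{t|0}(\tx|\rvx)$ and its gradient $\nabla_{\tx} p_{t|0}(\tx|\rvx)=p_{t|0}(\tx|\rvx)(\rvx-\tx)/\sigma_t^2$ have finite second moments under $p_0$, so the multivariate CLT applies to the stacked vector $(\nabla_{\tx} p_{t|0}(\tx|\rvx),\,p_{t|0}(\tx|\rvx))$, yielding joint asymptotic normality of $(A_n,B_n)$ around $(\nabla_{\tx} p_t(\tx),\,p_t(\tx))$.

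I would then apply the delta method to $g(a,b)=a/b$. The law of large numbers gives $A_n\to\nabla_{\tx} p_t(\tx)$ and $B_n\to p_t(\tx)$, so $\hat{\rvs}_n\to\nabla_{\tx}\log p_t(\tx)$, which is the consistency / asymptotic-unbiasedness statement. Linearizing, the per-sample influence function is $\frac{1}{p_t(\tx)}\big(\nabla_{\tx} p_{t|0}(\tx|\rvx)-\tfrac{\nabla_{\tx} p_t(\tx)}{p_t(\tx)}\,p_{t|0}(\tx|\rvx)\big)$, and its covariance is the asymptotic covariance of $\sqrt{n}(\hat{\rvs}_n-\nabla_{\tx}\log p_t(\tx))$. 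If the self-normalizing denominator is replaced by its concentration limit $n\,p_t(\tx)$ — i.e.\ retaining only the fluctuation of the numerator — this reduces to the claimed $\Cov(\nabla_{\tx} p_{t|0}(\tx|\rvx))/p_t(\tx)^2$.

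The main obstacle I anticipate is the covariance bookkeeping: the exact SNIS/delta-method limit also contains cross-terms between numerator and denominator and a denominator-variance term, so matching the stated leading form $\Cov(\nabla_{\tx} p_{t|0}(\tx|\rvx))/p_t(\tx)^2$ requires either treating the normalizer $B_n$ as concentrating at $p_t(\tx)$ or arguing those extra terms are of lower order; I would make this approximation explicit. A secondary, cosmetic point is the interpretation of ``$\rvs_{\textrm{STF}}^*$ is asymptotically normal'': since $\rvs_{\textrm{STF}}^*$ is itself an expectation, the convergence is really a statement about the random STF target $\hat{\rvs}_n$ over the draw of the reference batch, whose mean is $\rvs_{\textrm{STF}}^*$ and whose asymptotic mean equals the true score, which is exactly what makes STF asymptotically unbiased.
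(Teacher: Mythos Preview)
Your proposal is correct and follows essentially the same route as the paper's proof: simplify the weighted target to the ratio $A_n/B_n$ with $A_n=\tfrac{1}{n}\sum_k \nabla_{\tx} p_{t|0}(\tx|\rvx_k)$ and $B_n=\tfrac{1}{n}\sum_j p_{t|0}(\tx|\rvx_j)$, apply the CLT to the numerator and the WLLN to the denominator, and combine via Slutsky---which is exactly your ``replace $B_n$ by its concentration limit'' step. Your additional remarks (the $O(1/n)$ negligibility of the single posterior-drawn sample, the extra delta-method cross terms that the stated covariance omits, and the interpretation of $\rvs_{\textrm{STF}}^*$ as the random target rather than its expectation) are all accurate observations that the paper's short proof glosses over.
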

We defer the proof to Appendix~\ref{app:proof-bias}. The theorem states that, for commonly used transition kernels, $\rvs_{\textrm{STF}}^*(\tx,t)-\nabla_{\tx} \log p_t(\tx)$ converges to a zero mean normal, and larger reference batch size $(n)$ will lead to smaller asymptotic variance. As can be seen in \Eqref{eq:asym}, when $n\to\infty$, $\rvs_{\textrm{STF}}^*(\tx,t)$ highly concentrates around the true score $\nabla_{\tx} \log p_t(\tx)$.

\subsection{Trace of Covariance}
\label{sec:var}
We now highlight the small variations of the training targets in the STF objective compared to the DSM. As done in \Secref{sec:three}, we study the trace-of-covariance of training targets 
in STF:
\begin{align*}
    V_{\textrm{STF}}(t) &= \E_{p_t(\tx)}\left[\Tr\left(\Cov_{p_{0|t}(\cdot|\tx)p_0^{n-1}}\left(\sum_{k=1}^n \frac{p_{t|0}(\tx|\rvx_k)}{\sum_{j} p_{t|0}(\tx|\rvx_j)}\nabla_{\tx}\log p_{t|0}(\tx|\rvx_k)\right)\right)\right].
\end{align*}
In the following theorem we compare $ V_{\textrm{STF}}$ with $V_{\textrm{DSM}}$. In particular, we can upper bound $V_{\textrm{STF}}(t)$ by 


\begin{restatable}{theorem}{thmvar}\label{thm:var}Suppose $\forall t \in [0,1], 0<\sigma_t<\infty$, then
\begin{align*}
    V_{\textrm{STF}}(t) \le \frac{1}{n-1}\left(V_{\textrm{DSM}}(t)  + \frac{\sqrt{3}d}{\sigma_t^2}\sqrt{\E_{p_{t}(\tx)}{D_{f}\left(p_0(\rvx)\parallel {p_{0|t}(\rvx|\tx)}\right)}}\right) + O\left(\frac{1}{n^2}\right),
\end{align*}
where $D_{f}$ is an f-divergence with $f(y)=\begin{cases}(1/y-1)^2 & (y<1.5)\\ 8y/27-1/3 & (y\ge 1.5)\end{cases}$. Further, when $n \gg d$ and $p_{0|t}(\rvx|\tx)\approx p_0(\rvx)$ for all $\tx$, $V_{\textrm{STF}}(t)\lessapprox \frac{V_{\textrm{DSM}}(t)}{n-1}$.
\end{restatable}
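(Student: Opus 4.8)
The plan is to recognize the STF target, for a fixed pair $(\tx,t)$, as a self-normalized importance sampling (SNIS) estimator of the true score $\mu:=\nabla_{\tx}\log p_t(\tx)$, with proposal $p_0$, integrand $h(\rvx):=\nabla_{\tx}\log p_{t|0}(\tx|\rvx)$, and unnormalized weight $p_{t|0}(\tx|\rvx_k)$ for the $k$-th reference point. Since $\E_{p_0}[p_{t|0}(\tx|\rvx)]=p_t(\tx)$ and $\E_{p_0}[p_{t|0}(\tx|\rvx)h(\rvx)]=\nabla_{\tx}p_t(\tx)$, the estimator $W_n=\overline{wh}/\bar w$ is consistent. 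The only difference from textbook SNIS (as in \citep{mcbook}) is that in the covariance of $V_{\textrm{STF}}(t)$ the first sample $\rvx_1$ is drawn from the posterior $p_{0|t}(\cdot|\tx)$ rather than from $p_0$; this single term perturbs the normalizer by $O(1/n)$ and hence enters the covariance only at order $O(1/n^2)$, while the $n-1$ genuine proposal samples supply the leading behavior, which is exactly why the bound carries a $\tfrac{1}{n-1}$ rather than $\tfrac1n$.

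First I would run the delta-method expansion of $W_n$ around its limit $\big(\E_{p_0}[w],\E_{p_0}[wh]\big)=\big(p_t(\tx),\nabla_{\tx}p_t(\tx)\big)$. Setting $g(\rvx)=p_{t|0}(\tx|\rvx)\,(h(\rvx)-\mu)$, which is $p_0$-mean-zero, the leading covariance of $W_n$ is $p_t(\tx)^{-2}\Cov\!\big(\tfrac1n\sum_k g(\rvx_k)\big)$. Because the $\rvx_k$ are independent and only $\rvx_1$ is off-distribution, the trace of this covariance is $\tfrac{n-1}{n^2}\,p_t(\tx)^{-2}\E_{p_0}\|g\|^2$ plus an $O(1/n^2)$ posterior contribution; using $\tfrac{n-1}{n^2}=\tfrac1{n-1}+O(1/n^2)$ and taking $\E_{p_t}$ gives
$$V_{\textrm{STF}}(t)=\frac{1}{n-1}\,\E_{p_t(\tx)}\!\Big[\frac{1}{p_t(\tx)^2}\,\E_{p_0}\big[p_{t|0}(\tx|\rvx)^2\|h-\mu\|^2\big]\Big]+O\big(\tfrac1{n^2}\big).$$

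Next I would convert the inner term via the identity $p_{t|0}(\tx|\rvx)/p_t(\tx)=p_{0|t}(\rvx|\tx)/p_0(\rvx)$, rewriting it as $\E_{p_{0|t}}[(p_{0|t}/p_0)\,\|h-\mu\|^2]$, and split $p_{0|t}/p_0=1+(p_{0|t}/p_0-1)$. The ``$1$'' piece integrates against $p_t$ to exactly $V_{\textrm{DSM}}(t)$ (Eq.~\ref{eq:vt}); the remainder $C:=\E_{p_t}\E_{p_{0|t}}[(p_{0|t}/p_0-1)\|h-\mu\|^2]$ is controlled by Cauchy--Schwarz (in $\rvx$, then in $\tx$): $C\le\big(\E_{p_t}\E_{p_{0|t}}[(p_{0|t}/p_0-1)^2]\big)^{1/2}\big(\E_{p_t}\E_{p_{0|t}}\|h-\mu\|^4\big)^{1/2}$. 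For the first factor I would verify the pointwise bound $(p_{0|t}/p_0-1)^2\le f(p_0/p_{0|t})$: with $y=p_0/p_{0|t}$, the stated $f$ equals $(1/y-1)^2$ on $(0,3/2]$ and its tangent line beyond the inflection point $y=3/2$ (where $(1/y-1)^2$ turns concave), so $f$ is the convex $C^1$ majorant of $(1/y-1)^2$, making $D_f(p_0\parallel p_{0|t})=\E_{p_{0|t}}[f(p_0/p_{0|t})]$ a bona fide f-divergence that dominates the $\chi^2$-type term; this yields the factor $\sqrt{\E_{p_t}[D_f(p_0\parallel p_{0|t})]}$. For the second factor I would use the forward representation $\tx=\rvx+\sigma_t\epsilon$ with $\epsilon\sim\gN(\bm{0},\mI)$, under which $h-\mu=-\tfrac1{\sigma_t}\big(\epsilon-\E[\epsilon|\tx]\big)$, so $\E\|h-\mu\|^4=\sigma_t^{-4}\,\E\|\epsilon-\E[\epsilon|\tx]\|^4\le\sigma_t^{-4}\E\|\epsilon\|^4=\sigma_t^{-4}d(d+2)\le 3d^2/\sigma_t^4$, giving $\sqrt3\,d/\sigma_t^2$. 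Combining the two factors produces the stated bound, and the final approximation $V_{\textrm{STF}}(t)\lessapprox V_{\textrm{DSM}}(t)/(n-1)$ follows because $p_{0|t}\approx p_0$ forces $D_f\to0$.

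The hard part is twofold. The delicate analytic point is the fourth-moment step: the centering inequality $\E\|\epsilon-\E[\epsilon|\tx]\|^4\le\E\|\epsilon\|^4$ is \emph{not} a universal fact, since centering can inflate a fourth moment when the conditional law is skewed, so it must be justified from the Gaussian structure of $\epsilon$ (or replaced by the cruder $\|h-\mu\|^4\le 8\|h\|^4+8\|\mu\|^4$ together with conditional Jensen, at the cost of a larger constant than $\sqrt3$). The other obstacle is making the delta-method remainder rigorous: one must show the normalizer $\bar w$ concentrates and bound the Taylor remainder uniformly enough that all neglected terms — including the entire contribution of the posterior sample $\rvx_1$ — are genuinely $O(1/n^2)$, which relies on the finite-moment consequences of $0<\sigma_t<\infty$ that keep $\E_{p_0}[p_{t|0}(\tx|\rvx)^2\|h-\mu\|^2]$ and the analogous posterior moment finite.
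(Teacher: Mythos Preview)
Your plan closely parallels the paper's proof: both replace the random normalizer by a deterministic one (you via the delta method, the paper via Hoeffding's inequality on $\sum_{j\ge 2}p_{t|0}(\tx|\rvx_j)$), show the posterior sample $\rvx_1$ contributes only at order $O(1/n^2)$, extract $V_{\textrm{DSM}}(t)$ by splitting the weight $p_{0|t}/p_0=1+(p_{0|t}/p_0-1)$, and control the remainder by Cauchy--Schwarz, the $f$-divergence majorant of $(p_{0|t}/p_0-1)^2$, and a fourth-moment bound. The one substantive difference is that your delta-method route produces the \emph{centered} leading term $\E_{p_{0|t}}\!\big[(p_{0|t}/p_0)\,\|h-\mu\|^2\big]$, whereas the paper's Hoeffding substitution yields the \emph{uncentered} $\Tr\Cov_{p_0}\!\big((p_{t|0}/p_t)\,h\big)=\E_{p_{0|t}}\!\big[(p_{0|t}/p_0)\,\|h\|^2\big]-\|\mu\|^2$. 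After adding and subtracting $\E_{p_{0|t}}[h_i^2]$, the paper's remainder is therefore in $h_i^2$, not in $(h_i-\mu_i)^2$.

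This seemingly cosmetic difference is exactly what resolves the gap you flag. In the paper's version the fourth-moment factor is $\E_{p_{0,t}}[h_i^4]$, and under the joint law $p_{0,t}$ one has $h_i=-\epsilon_i/\sigma_t$ with $\epsilon_i\sim\gN(0,1)$, so $\E[h_i^4]=3/\sigma_t^4$ \emph{exactly}, coordinate by coordinate; summing over $i$ gives the stated $\sqrt{3}\,d/\sigma_t^2$ with no further inequality. Your route instead requires $\E\|\epsilon-\E[\epsilon\mid\tx]\|^4\le\E\|\epsilon\|^4$, and neither of your proposed remedies closes this: the ``Gaussian structure'' argument fails because only the \emph{marginal} of $\epsilon$ is Gaussian while the conditional law of $\epsilon$ given $\tx$ depends on the arbitrary $p_0$ and can be skewed, and the cruder bound $\|h-\mu\|^4\le 8\|h\|^4+8\|\mu\|^4$ yields a constant strictly larger than $\sqrt{3}$, hence proves only a weaker statement than the theorem. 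The easy fix is to keep $h$ uncentered after linearizing the normalizer, as the paper does; the rest of your plan then goes through verbatim.
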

We defer the proof to Appendix~\ref{sec:proof-thmvar}. {The second term that involves $f$-divergence $D_f$ is necessary to capture how the coefficients, \ie ${p_{t|0}(\tx|\rvx_k)}/{\sum_{j} p_{t|0}(\tx|\rvx_j)}$ used to calculate the weighted score target, vary across different samples $\tx$. This term decreases monotonically as a function of $t$. In Phase 1, $p_{0|t}(\rvx|\tx)$ differs substantially from $p_0(\rvx)$ and the divergence term $D_f$ dominates. In contrast to the upper bound, both $V_{\textrm{STF}}(t)$ and $V_{\textrm{DSM}}(t)$ have minimal variance at small values of $t$ since the training target is always dominated by one $\rvx$. The theorem has more relevance in Phase 2, where the divergence term decreases to a value comparable to $V_{\textrm{DSM}}(t)$. In this phase, we empirically observe that the ratio of the two terms in the upper bound ranges from 10 to 100. Thus, when we use a large reference batch size (in thousands), the theorem implies that STF offers a considerably lower variance (by a factor of 10 or more) relative to the DSM objective. In Phase 3, the second term vanishes to 0, as $p_t\approx p_{t|0}$ with large $\sigma_t$ for commonly used transition kernels. As a result, STF reduces the average trace-of-covariance of the training targets by at least $n-1$ times in the far field.} 


Together, we demonstrate that the STF targets have diminishing bias~(Theorem~\ref{thm:bias}) and are much more stable during training~(Theorem~\ref{thm:var}). These properties make the STF objective more favorable for diffusion models training with 
stochastic gradient optimization. 

\section{Experiments}
\label{sec:exp}

In this section, we first empirically validate our theoretical analysis in \Secref{sec:analysis}, especially for variance reduction in the intermediate phase~(\Secref{sec:exp-inter}). Next, we show that the STF objective improves various diffusion models on image generation tasks in terms of {image quality}
~(\Secref{sec:exp-gen}). In particular, STF achieves state-of-the-art performance on top of EDM. In addition, we demonstrate that STF accelerates the training of diffusion models~(\Secref{sec:exp-train}), and improves the convergence speed and final performance with an increasing reference batch size~(\Secref{sec:exp-rbs}).


\subsection{Variance reduction in the intermediate phase}
\label{sec:exp-inter}
\begin{figure*}[htbp]
    \centering
    \subfigure[]{\label{fig:vd_g}\includegraphics[width=0.24\textwidth, trim = 5cm 0cm 7cm 0cm]{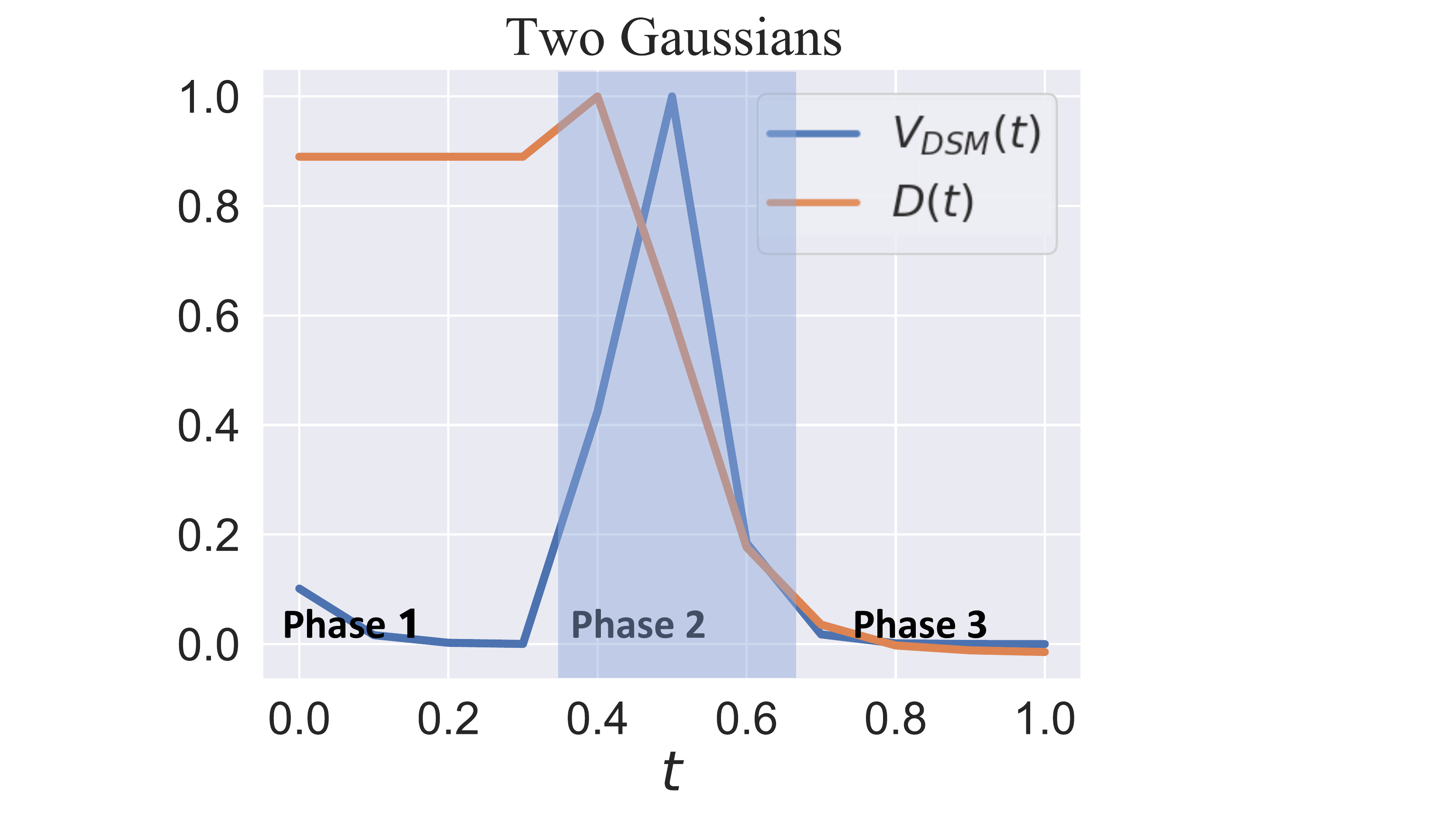}}\hfill
            \subfigure[]{\label{fig:vd_c}\includegraphics[width=0.235\textwidth, trim = 4cm 0cm 8.5cm 0cm]{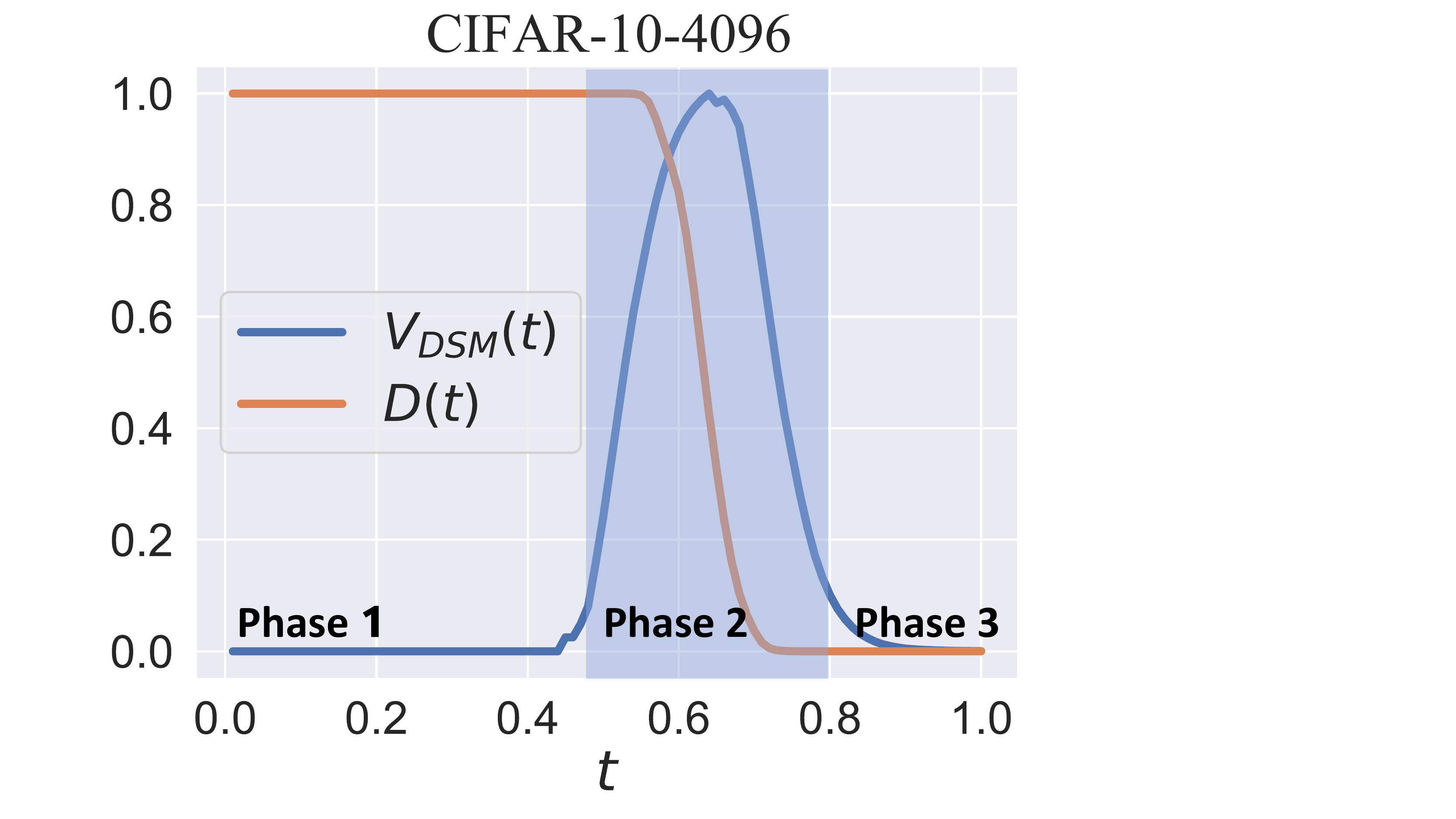}}\hfill
    \subfigure[]{\label{fig:vt_g}\includegraphics[width=0.225\textwidth, trim = 0.9cm 0cm 0.9cm 0cm]{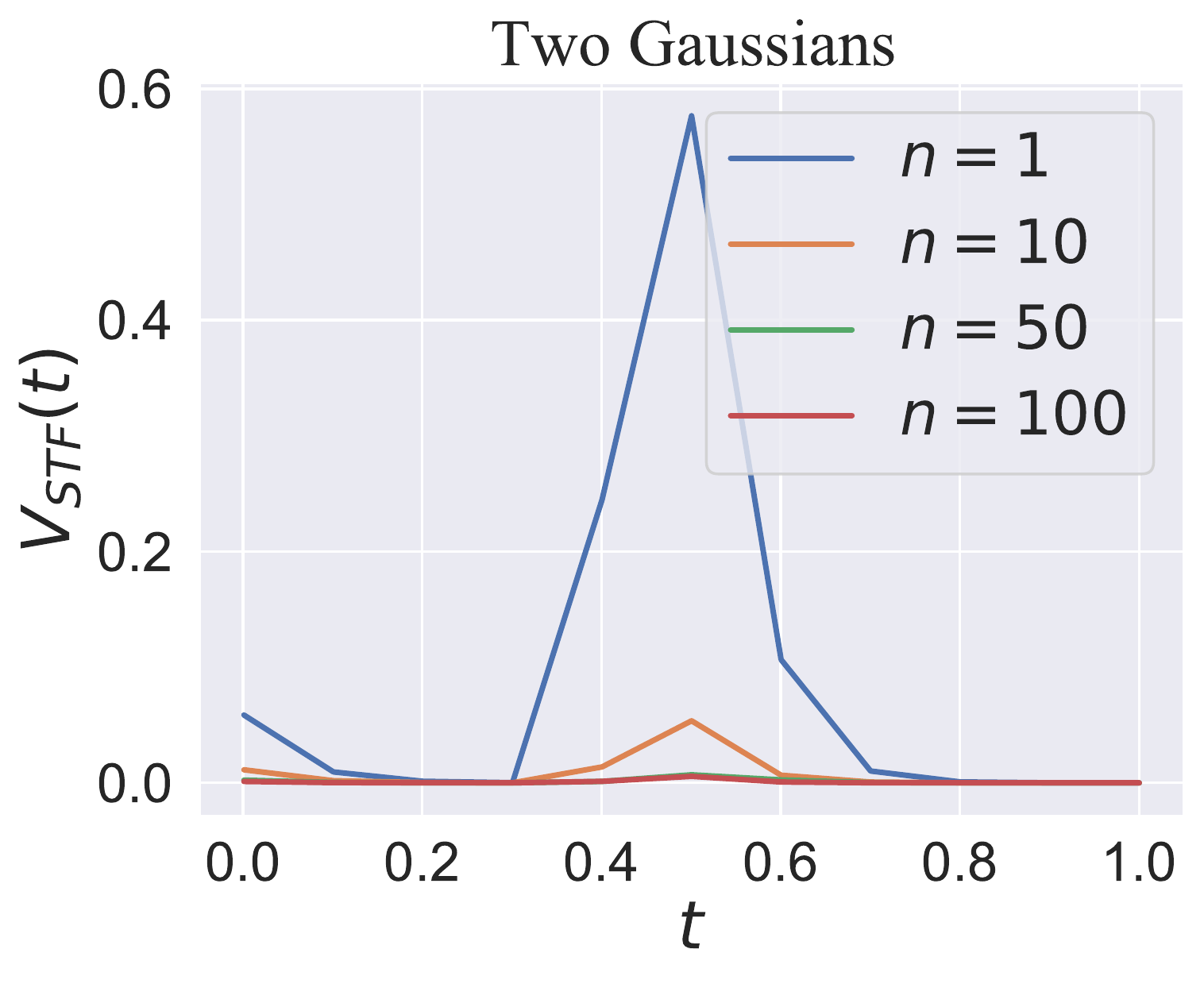}}\hfill
        \subfigure[]{\label{fig:vt_c}\includegraphics[width=0.225\textwidth, trim = 0.9cm 0cm 0.9cm 0cm]{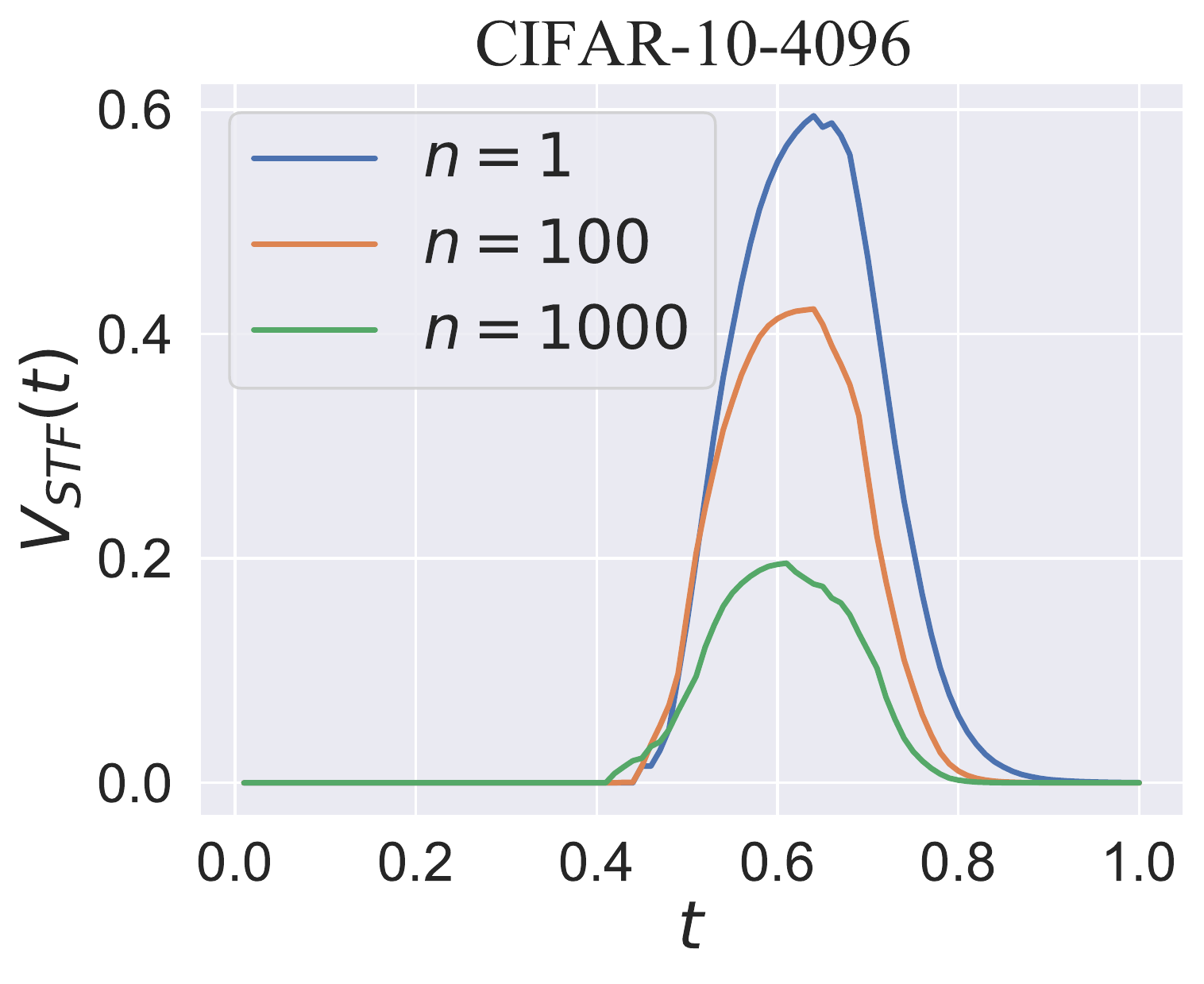}}
    \vspace{-8pt}
    \caption{\textbf{(a, b)}: $V_\textrm{DSM}(t)$ and $D(t)$ versus $t$. We normalize the maximum values to $1$ for illustration purposes. \textbf{(c, d)}: $V_{\textrm{STF}}(t)$ with a varying reference batch size $n$.}
    \label{fig:tp}
    \vspace{-3pt}
\end{figure*}

The proposed Algorithm~\ref{alg:tf} utilizes a large reference batch to calculate the stable target field instead of the individual target. In addition to the theoretical analysis in \Secref{sec:analysis}, we provide further empirical study to characterize the intermediate phase and verify the variance reduction effects by STF. Apart from $V(t)$, we also quantify the average divergence between the posterior $p_{0|t}(\cdot|\tx)$ and the data distribution $p_0$ at time $t$ (introduced in Theorem~\ref{thm:var}): $D(t) = \E_{p_{t}(\tx)}\left[{D_{f}\left({p_{0|t}(\rvx|\tx)}\parallel p_0(\rvx)\right)}\right]$. 
Intuitively, the number of high-density modes in $p_{0|t}(\cdot|\tx)$ grows as $D(t)$ decreases. 
To investigate their behaviors, we construct two synthetic datasets: (1) a 64-dimensional mixture of two Gaussian components~(\textbf{Two Gaussians}), and (2) a subset of 1024 images of CIFAR-10 (\textbf{CIFAR-10-4096}). 

\Figref{fig:vd_g} and \Figref{fig:vd_c} show the behaviors of $V_{\textrm{DSM}}(t)$ 
and $D(t)$ on Two Gaussian and CIFAR-10-4096. In both settings, $V_\textrm{DSM}(t)$ reaches its peak in the intermediate phase~(Phase 2), 
while $D(t)$ gradually decreases over time. These results agree with our theoretical understanding from \Secref{sec:three}. In Phase 2 and 3, several modes of the data distribution have noticeable influences on the scores, but only in Phase 2 are the influences much more distinct, leading to high variations of the individual target $\nabla_{\tx} \log p_{t|0}(\tx|\rvx), \rvx\sim p_{0|t}(\cdot|\tx)$.


\Figref{fig:vt_g} and \Figref{fig:vt_c} further show the relationship between $V_{\textrm{STF}}(t)$ and the reference batch size $n$. Recall that when $n=1$, STF degenerates to individual target and $V_{\textrm{STF}}(t)=V_{\textrm{DSM}}(t)$. We observe that $V_{\textrm{STF}}(t)$ decreases when enlarging $n$. In particular, the predicted relation $V_{\textrm{STF}}(t) \lessapprox V_{\textrm{DSM}}(t)/(n-1)$ in Theorem~\ref{thm:var} holds for the two Gaussian datasets where $D_f$ is small. On the high dimensional dataset CIFAR-10-4096, the stable target field can still {greatly reduce the training target variance with large reference batch sizes $n$.}




\subsection{Image generation}
\label{sec:exp-gen}


\begin{table}[ht]
\small
    \centering        \caption{{CIFAR-10 sample quality~(FID, Inception) and number of function evaluation~(NFE).}}
\begin{threeparttable}[t]
    \centering
    \begin{tabular}{l l l c}
    \toprule
    Methods & Inception $\uparrow$  &FID $\downarrow$ & NFE $\downarrow$ \\
    \midrule
    StyleGAN2-ADA~\citep{Karras2020TrainingGA} &  $9.83$ & $2.92$ & $1$\\
        DDPM~\citep{ho2020denoising}& $9.46$ & $3.17$&$1000$\\
             NCSNv2~\citep{Song2020ImprovedTF}&  ${8.40 }$ & $10.87$ & $1161$\\
        PFGM~\citep{xu2022poisson} & $9.68$ & $2.48$ & $104$\\
    \midrule
    \textit{\textbf{VE}}~\citep{song2021scorebased}\\
    \midrule
        DSM - RK45& $9.27$ &$8.90$& $264$\\
        STF~(ours) - RK45  & ${9.52}$ $\textcolor{green}{\uparrow}$ & ${5.51 }$ $\textcolor{green}{\downarrow}$& $200$\\
        DSM - PC& $9.68$ & $2.75$& $2000$\\
        STF~(ours) - PC&${9.86} $ $\textcolor{green}{\uparrow}$& ${2.66 }$ $\textcolor{green}{\downarrow}$ & $2000$\\
    \midrule
    \textit{\textbf{VP}}~\citep{song2021scorebased}\\
    \midrule
    DSM - DDIM& $9.20$ &$5.16$& $100$\\
    STF~(ours) - DDIM  & $9.28$ $\textcolor{green}{\uparrow}$ & ${5.06 }$ $\textcolor{green}{\downarrow}$& $100$\\
    DSM - RK45& $9.46$& $2.90$& $140$\\
    STF~(ours) - RK45&$9.43$ $\textcolor{red}{\downarrow}$&$2.99$ $\textcolor{red}{\uparrow}$& $140$\\
    \midrule
    \textit{\textbf{EDM}}~\citep{karras2022elucidating}\\
    \midrule
    DSM - Heun, NCSN++ & $9.82$& $1.98$& $35$\\
    STF~(ours) - Heun, NCSN++  & $\textbf{9.93}$ $\textcolor{green}{\uparrow}$& $\textbf{1.90}$ $\textcolor{green}{\downarrow}$&$35$\\
    DSM - Heun, DDPM++ & $9.78$& $1.97$& $35$\\
    STF~(ours) - Heun, DDPM++  & ${9.79}$ $\textcolor{green}{\uparrow}$& ${1.92}$ $\textcolor{green}{\downarrow}$&$35$\\
    \bottomrule
    \end{tabular}
    \end{threeparttable}
    \label{tab:cifar10}
\end{table}

We demonstrate the effectiveness of the new objective on image generation tasks. We consider CIFAR-10~\citep{krizhevsky2009learning} and CelebA $64\times 64$~\citep{Yang2015FromFP} datasets. We set the reference batch size $n$ to $4096$~(CIFAR-10) and $1024$~(CelebA $64^2$). {We choose the current state-of-the-art score-based method EDM~\citep{karras2022elucidating} as the baseline, and replace the DSM objective with our STF objective during training. We also apply STF to two other popular diffusion models, VE/VP SDEs~\citep{song2021scorebased}. For a fair comparison, we directly adopt the architectures and the hyper-parameters in \citet{Karras2018ProgressiveGO} and \citet{song2021scorebased} for EDM and VE/VP respectively. In particular, we use the improved NCSN++/DDPM++ models~\citep{karras2022elucidating} in the EDM scheme. To highlight the stability issue, we train three models with different seeds for VE on CIFAR-10.} We provide more experimental details in Appendix~\ref{app:exp-training}.

\textbf{Numerical Solver.} The reverse-time ODE and SDE in scored-based models are compatible with any general-purpose solvers. {We use the adaptive solver RK45 method~\citep{Dormand1980AFO, song2021scorebased}~({RK45}) for VE/VP and the popular DDIM solver~\citep{Song2021DenoisingDI} for VP. We adopt Heun’s 2nd order method~({Heun}) and the time discretization proposed by \citet{karras2022elucidating} for EDM. For SDEs, we apply the predictor-corrector~({PC}) sampler used in ~\citep{song2021scorebased}. We denote the methods in a objective-sampler format, \ie \textbf{A}-\textbf{B}, where \textbf{A} $\in$ \{DSM, STF\} and \textbf{B} $\in$ \{RK45, PC, DDIM, Heun\}.} We defer more details to Appendix~\ref{app:exp-sampling}. 

\textbf{Results.} For quantitative evaluation of the generated samples, we report the FID scores~\citep{Heusel2017GANsTB}~(lower is better) and Inception~\citep{Salimans2016ImprovedTF}~(higher is better). We measure the sampling speed by the average NFE (number of function evaluations). We also include the results of several popular generative models~\citep{Karras2020TrainingGA, ho2020denoising, song2019generative, xu2022poisson} for reference.

Table~\ref{tab:cifar10} and Table~\ref{tab:celeba-imagenet} report the sample quality and the sampling speed on unconditional generation of CIFAR-10 and CelebA $64^2$. Our main findings are: \textbf{(1) STF achieves new state-of-the-art FID scores for unconditional generation on CIFAR-10 benchmark.} As shown in Table~\ref{tab:cifar10}, The STF objective obtains a FID of $1.90$ when incorporated with the EDM scheme. To the best of our knowledge, this is the lowest FID score on the unconditional CIFAR-10 generation task. In addition, the STF objective consistently improves the EDM across the two architectures. \textbf{(2) The STF objective improves the performance of different diffusion models.} We observe that the STF objective improves the FID/Inception scores of VE/VP/EDM on CIFAR-10, for most ODE and SDE samplers. STF consistently provides performance gains for VE across datasets. Remarkably, our objective achieves much better sample quality using ODE samplers for VE, with an FID score gain of $3.39$ on CIFAR-10, and $2.22$ on Celeba $64^2$. \begin{wrapfigure}{r}{0.33\textwidth} 
\vspace{0pt}
\scriptsize
\centering
\captionof{table}{
FID and NFE on CelebA $64^2$
} \label{tab:celeba-imagenet}
\begin{tabular}{c c c}
    \toprule
    Methods/NFEs &  FID $\downarrow$ & NFE $\downarrow$\\
    \midrule
    \multicolumn{3}{l}{\textit{\textbf{CelebA $64^2$ - RK45}}}\\
    \midrule
        VE (DSM) &$7.56$& $260$\\
        VE (STF) & $\bm{5.34}$ & $266$\\
    \midrule
    \multicolumn{3}{l}{\textit{\textbf{CelebA $64^2$ - PC}}}\\
    \midrule
        VE (DSM) & ${9.13}$ &  2000\\
        VE (STF) &$\bm{8.28}$ &  2000\\
    \bottomrule
    \end{tabular}
\end{wrapfigure}
For VP, STF provides better results on the popular DDIM sampler, while suffering from a slight performance drop when using the RK45 sampler.  \textbf{(3) The STF objective stabilizes the converged VE model with the RK45 sampler.} In Appendix~\ref{app:std}, we report the standard deviations of performance metrics for converged models with different seeds on CIFAR-10 with VE. We observe that models trained with the STF objective give more consistent results, with a smaller standard deviation of used metrics.

We further provide generated samples in Appendix~\ref{app:extended}. One interesting observation is that when using the RK45 sampler for VE on CIFAR-10, the generated samples from the STF objective do not contain noisy images, unlike the vanilla DSM objective.

\subsection{Accelerating training of diffusion models}
\label{sec:exp-train}
\begin{figure}[htbp]
\centering
\subfigure[CIFAR-10]{\label{fig:fid_train_cifar}\includegraphics[width=0.47\textwidth,trim = 1cm 0.5cm 0cm 1cm]{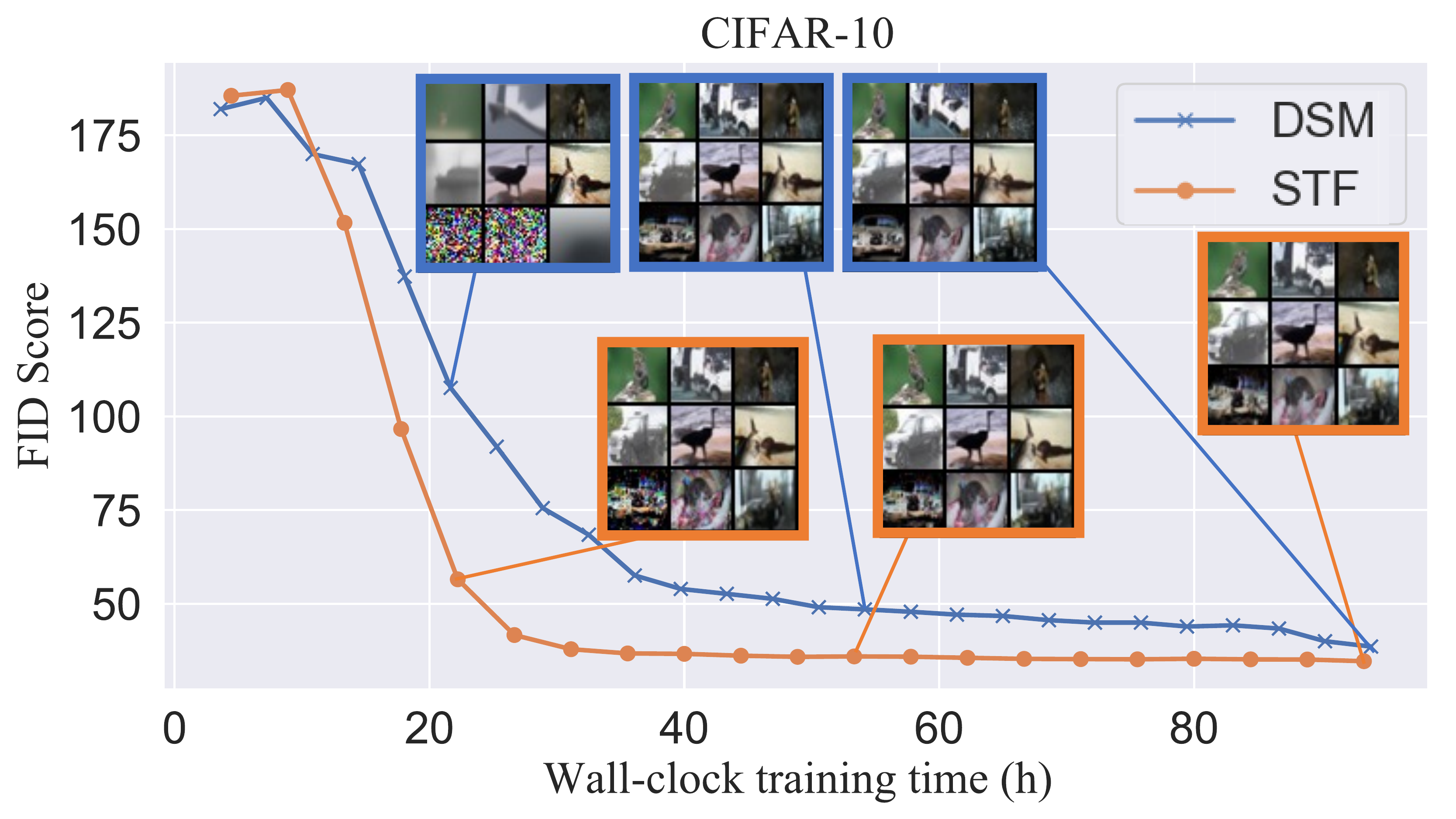}}\hfill
\subfigure[CelebA $64\times 64$]{\label{fig:fid_train_celeba}\includegraphics[width=0.48\textwidth,trim = 1.cm 1cm 0cm 1cm]{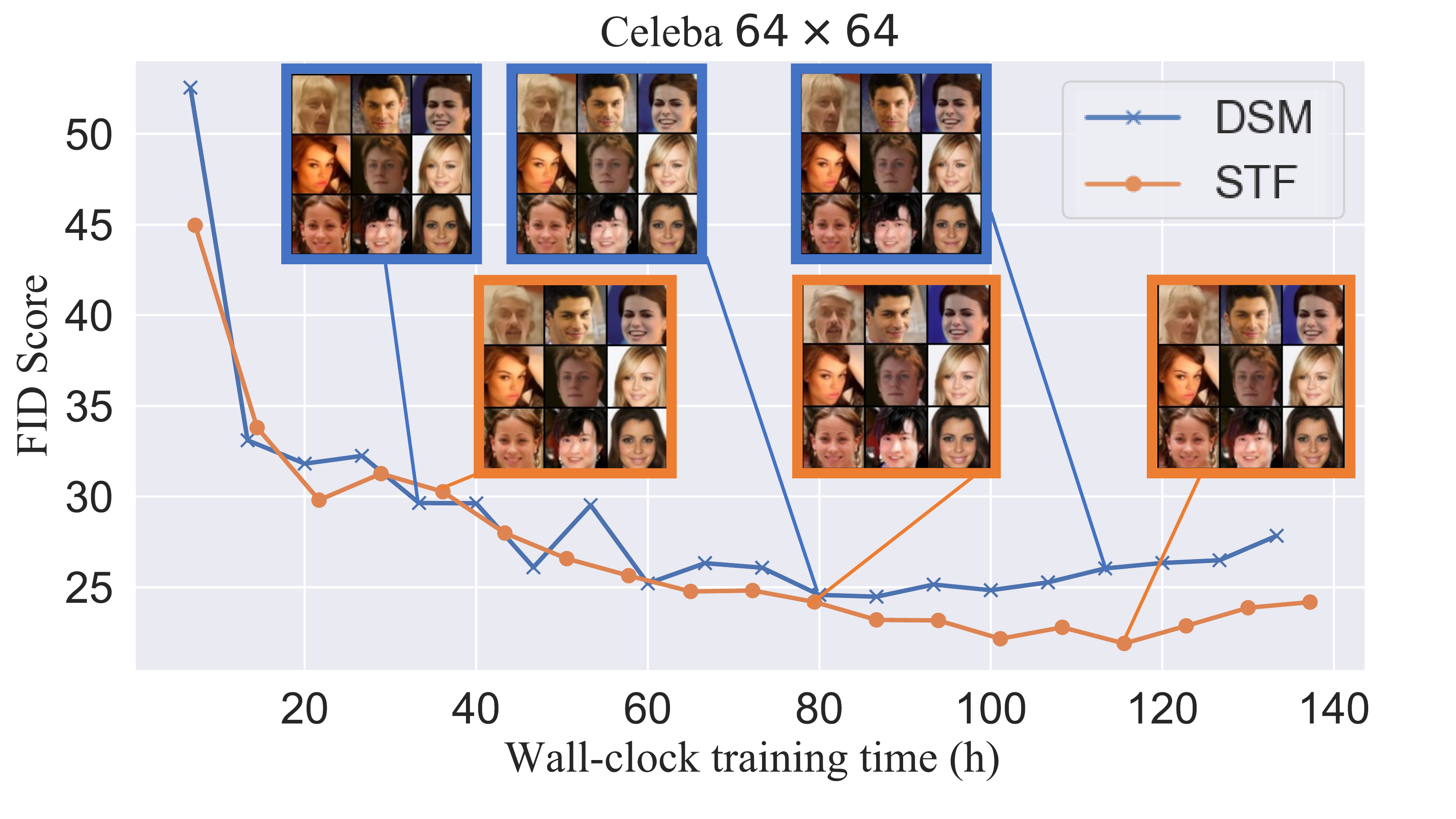}}
\vspace{-10pt}
\caption{FID and generated samples throughout training on (a) CIFAR-10 and (b) CelebA $64^2$.}
\vspace{-8pt}
\label{fig:fid_vs_train}
\end{figure}
The variance-reduction techniques in neural network training can help to find better optima and achieve faster convergence rate~\citep{Wang2013VarianceRF, defazio2014saga, johnson2013accelerating}. In \Figref{fig:fid_vs_train}, we demonstrate the FID scores every 50k iterations during the course of training. {Since our goal is to investigate relative performance during the training process, and because the FID scores computed on 1k samples are strongly correlated with the full FID scores on 50k sample~\citep{Song2020ImprovedTF}, we report FID scores on 1k samples for faster evaluations.} We apply ODE samplers for FID evaluation, and measure the training time on two NVIDIA A100 GPUs. For a fair comparison, we report the average FID scores of models trained by the DSM and STF objective on VE versus the wall-clock training time (h).

\label{sec:exp-rbs}
\begin{wrapfigure}{r}{0.4\textwidth} 
\centering
\vspace{-10pt}
\includegraphics[width=0.4\textwidth]{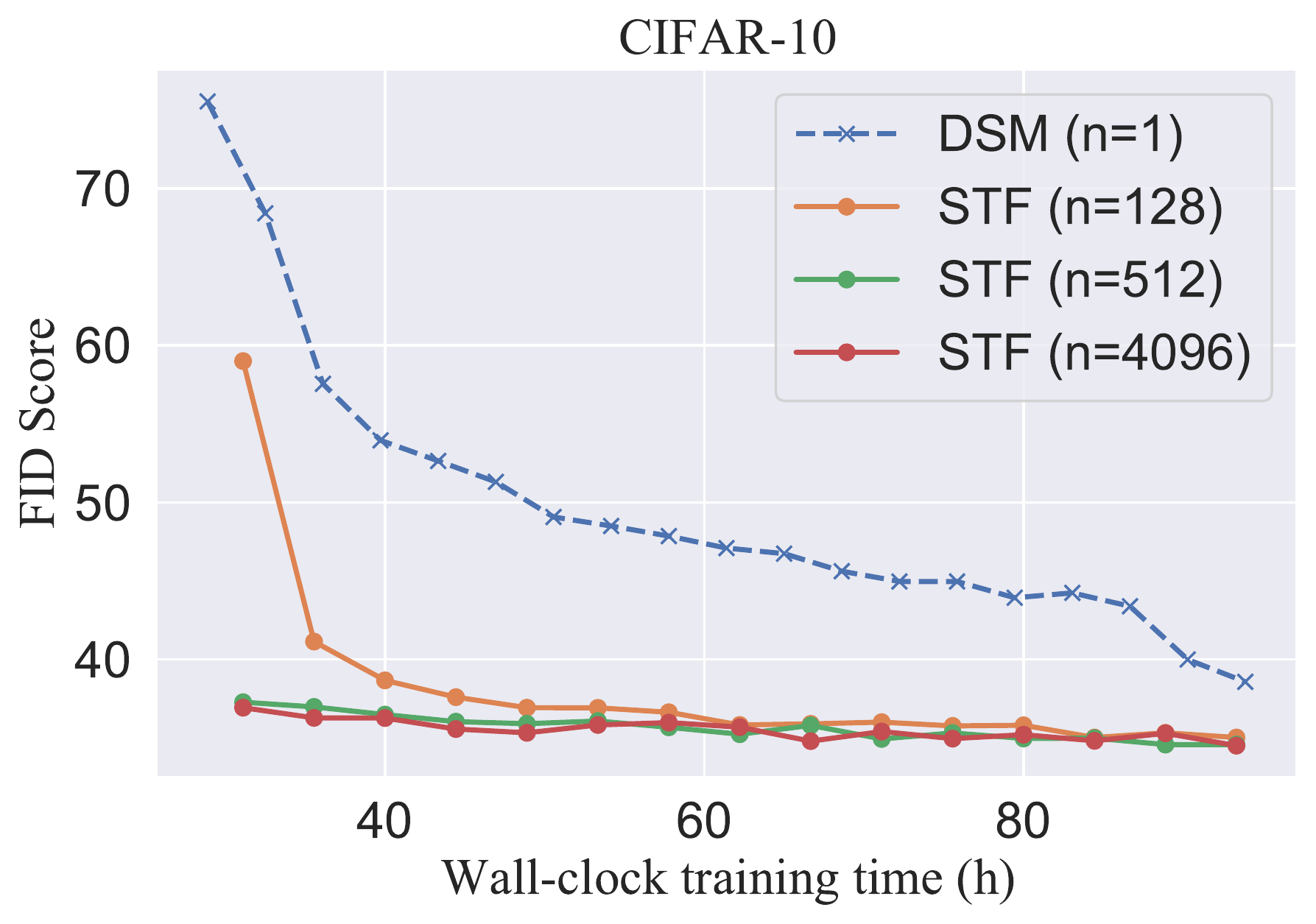}
\vspace{-15pt}
\caption{FID scores in the training course with varying reference batch size.}
\vspace{-10pt}
\label{fig:bs}
\end{wrapfigure} 
The STF objective achieves better FID scores with the same training time, although the calculation of the target field by the reference batch introduces slight overhead~(Algorithm~\ref{alg:tf}).
In \Figref{fig:fid_train_cifar}, we show that the STF objective drastically accelerates the training of diffusion models on CIFAR-10. The STF objective achieves comparable FID scores with $3.6\times$ less training time~(25h versus 90h). For CelebA $64^2$ datasets, the training time improvement is less significant than on CIFAR-10. Our hypothesis is that the STF objective is more effective when there are multiple well-separated modes in data distribution, \eg the ten classes in CIFAR-10, where the DSM objective suffer from relatively larger variations in the intermediate phase. In addition, the converged models have better final performance when pairing with the STF on both datasets. 


\subsection{Effects of the reference batch size}

 According to our theory~(Theorem~\ref{thm:var}), the upper bound of the trace-of-covariance of the STF target decreases proportionally to the reference batch size. Here we study the effects of the reference batch size~$(n)$ on model performances during training. The FID scores are evaluated on $1k$ samples using the RK45 sampler. As shown in \Figref{fig:bs}, models converge faster and produce better samples when increasing $n$. 
It suggests that smaller variations of the training targets can indeed speed up training and improve the final performances of diffusion models.

{\section{Related work}}
\label{sec:related}

{\textbf{Different phases of diffusion models.} 
The idea of diffusion models having different phases has been explored in prior works though the motivations and definitions vary~\citep{karras2022elucidating, choi2022perception}. \citet{karras2022elucidating} argues that the training targets are difficult and unnecessary to learn in the very near field~(small $t$ in our Phase 1), whereas the training targets are always dissimilar to the true targets in the intermediate and far field~(our Phase 2 and Phase 3). 
As a result, their solution is 
sampling $t$ with a log-normal distribution to emphasize the relevant region~(relatively large $t$ in our Phase 1). 
In contrast, we focus on reducing large training target variance in the intermediate and far field, and propose STF to better estimate the true target (cf. \citet{karras2022elucidating}). \citet{choi2022perception} identifies a key region where the model learns perceptually rich contents, and determines the training weights $\lambda(t)$ based on the signal-to-noise ratio (SNR) at different $t$. As SNR is monotonically decreasing over time, the resulting up-weighted region does not match our Phase 2 characterization. In general, our proposed STF method reduces the training target variance in the intermediate field and is complementary to previous improvements of diffusion models. 



}

\textbf{Importance sampling.} The technique of importance sampling has been widely adopted in machine learning community, such as debiasing generative models \citep{grover2019bias}, counterfactual learning \citep{Swaminathan2015TheSE} and reinforcement learning \citep{Metelli2018PolicyOV}. {Prior works using importance sampling to improve generative model training include reweighted wake-sleep (RWS) \citep{bornschein2014reweighted} and importance weighted autoencoders (IWAE) \citep{burda2015importance}. RWS views the original wake-sleep algorithm \citep{hinton1995wake} as importance sampling with one latent variable, and proposes to sample multiple latents to obtain gradient estimates with lower bias and variance. IWAE utilizes importance sampling with multiple latents to achieve greater flexibility of encoder training and tighter log-likelihood lower bound compared to the standard variational autoencoder \citep{kingma2013auto, rezende2014stochastic}.
}





{\textbf{Variance reduction for Fisher divergence.} 
One popular approach to score-matching is to minimize the Fisher divergence between true and predicted scores~\citep{hyvarinen2005estimation}.
\citet{wang2020wasserstein} links the Fisher divergence to denoising score-matching~\citep{Vincent2011ACB} and studies the large variance problem~(in  $O({1}/{\sigma_t^4})$) of the Fisher divergence when $t\to0$. They utilize a control variate to reduce the variance. However, this is typically not a concern for current diffusion models as the time-dependent objective can be viewed as multiplying the Fisher divergence by $\lambda(t)=\sigma_t^2$, resulting in a finite-variance objective even when $t\to 0$.



}

\vspace{-5pt}
\section{Conclusion}
\vspace{-3pt}
{We identify large target variance as a significant training issue affecting diffusion models.} 
We define three phases with distinct behaviors, and show that 
the high-variance targets appear in the intermediate phase. 
As a remedy, we present a generalized score-matching objective, \emph{Stable Target Field} (STF), whose formulation is analogous to the self-normalized importance sampling via a large reference batch. Albeit no longer an unbiased estimator, our proposed objective is asymptotically unbiased and reduces the trace-of-covariance of the training targets, which we demonstrate theoretically and empirically. 
We show the effectiveness of our method on image generation tasks, and show that STF improves the performance, stability, and training speed over various state-of-the-art diffusion models. Future directions include a principled study on the effect of different reference batch sampling procedures. Our presented approach 
is uniformly sampling from the whole dataset $\{\rvx_i\}_{i=2}^n\sim p_0^{n-1}$, so we expect that training diffusion models with a reference batch of more samples in the neighborhood of $\rvx_1$ (the sample from which $\tx$ is perturbed) would lead to an even better estimation of the score field. Moreover, the three-phase analysis can effectively capture the behaviors of other physics-inspired generative models, such as PFGM~\citep{xu2022poisson} or the more advanced PFGM++~\citep{Xu2023PFGMUT}. Therefore, we anticipate that STF can enhance the performance and stability of these models further.



\clearpage

\subsection*{Acknowledgements}

We are grateful to Benson Chen for reviewing an early draft of this paper. We would like to thank Hao He and the anonymous reviewers for their valuable feedback. YX and TJ acknowledge support from MIT-DSTA Singapore collaboration, from NSF Expeditions grant (award 1918839) “Understanding the World Through Code”, and from MIT-IBM Grand Challenge project. ST and TJ also acknowledge support from the ML for Pharmaceutical Discovery and Synthesis Consortium (MLPDS).

\bibliography{bib}
\bibliographystyle{iclr2023_conference}
\clearpage
\appendix
{\huge Appendix}
{
\section{STF specified with popular SGMs}
\label{app:stf_obj}

Here, we detail the practically used STF objectives in \Secref{sec:exp}, which are built on the popular instances of SGMs, \eg VE, VP \citep{song2021scorebased}, and EDM \citep{karras2022elucidating}.

\paragraph{VE and EDM} For VE and EDM, the transition kernel is in the form of
\begin{align*}
    p_{t|0}(\tx|\rvx) = \gN\left(\rvx, \;\sigma_t^2\mI\right) \quad t\in[0,1].
\end{align*}
VE has 
$\sigma_t = \sigma_{m}\left(\frac{\sigma_{M}}{\sigma_{m}}\right)^t$ for some fixed $\sigma_{m}$ and $\sigma_{M}$. 
EDM has 
$\sigma_t=(t\sigma_M^\frac{1}{\rho}+(1-t)\sigma_m^\frac{1}{\rho})^\rho$ for some $\sigma_m$ and $\sigma_M$, with $\rho$ set to 7 in practice.
The STF objective for both VE and EDM at $\tx$ is then in the following form:
\begin{align*}
    \E_{\rvx_1 \sim p_{0|t}(\cdot|\tx)}\E_{\{\rvx_i\}_{i=2}^{n} \sim p_0^{n-1}}\left[\bigg\Vert \rvs_\theta(\rvx(t),t) - \frac{1}{\sigma_t^2}{\sum_{k=1}^n \frac{\exp\left(-\frac{\| \rvx(t)-\rvx_k\|_2^2}{2\sigma_t^2}\right)}{\sum_j \exp\left(-\frac{\| \rvx(t)-\rvx_j\|_2^2}{2\sigma_t^2}\right)} }(\rvx_k - \rvx(t))\bigg\Vert_2^2\right].
\end{align*}




\paragraph{VP} VP in its original formulation has 
the transition kernel as
\begin{align*}
    p_{t|0}(\tx|\rvx) = \gN\left(e^{-\frac{1}{4}t^2(\beta_M-\beta_m)-\frac{1}{2}t\beta_m}\rvx, \;\mI-\mI e^{-\frac{1}{2}t^2(\beta_M-\beta_m)-t\beta_m}\right),
\end{align*}
for some $\beta_m$ and $\beta_M$. 
The STF objective for VP at $\tx$ is
\begin{align*}
     \E_{\rvx_1 \sim p_{0|t}}\E_{\{\rvx_i\}_{i=2}^{n} \sim p_0^{n-1}}
     \left[\bigg\Vert \rvs_\theta(\rvx(t),t) - \frac{1}{\sigma_t^2}{\sum_{k=1}^n \frac{\exp\left(-\frac{\| \rvx(t)-e^{\beta_t}\rvx_k\|_2^2}{2\sigma_t^2}\right)}{\sum_j \exp\left(-\frac{\| \rvx(t)-e^{\beta_t}\rvx_j\|_2^2}{2\sigma_t^2}\right)} }\left(e^{\beta_t}\rvx_k - \rvx(t)\right)\bigg\Vert_2^2\right],
\end{align*}
where $\beta_t = -\frac14 t^2(\beta_M-\beta_m) - \frac12 t\beta_m$, and $\sigma_t = \sqrt{1-e^{2\beta_t}}$.
Note that as shown in \citet{karras2022elucidating}, VP's transition kernel can be reparameterized in the form of $\gN(\rvx,\sigma_t^2\mI)$ with a correspondingly revised sampling process. Adopting this formulation, we would have the STF objective for VP the same as the one for VE and EDM with a different $\sigma_t$.

}

\section{Proofs}
{\subsection{Derivation of \Eqref{eq:s_w_star}}
\label{app:derivation_stf_opt}
Recall that the STF objective (\Eqref{eq:new-tsm}) at time $t$ is
\begin{multline*}
    \ell_{\textrm{STF}}(\theta, t)=
    \E_{\{\rvx_i\}_{i=1}^{n} \sim p_0^{n}}\E_{\rvx(t)\sim p_{t|0}(\cdot|\rvx_1)}\\
    \left[\Big\Vert \rvs_\theta(\rvx(t),t) - {\sum_{k=1}^n \frac{p_{t|0}(\tx|\rvx_k)}{\sum_{j=1}^n p_{t|0}(\tx|\rvx_j)} }\nabla_{\rvx(t)}\log p_{t|0}(\tx|\rvx_k)\Big\Vert_2^2\right]. 
\end{multline*}
Swapping the sampling order and we get
\begin{multline*}
    \ell_{\textrm{STF}}(\theta, t)= \E_{\rvx(t)\sim p_{t}}
    \E_{\{\rvx_i\}_{i=1}^{n} \sim p_{0|t}(\cdot|\tx)p_0^{n-1}}\\
    \left[\Big\Vert \rvs_\theta(\rvx(t),t) - {\sum_{k=1}^n \frac{p_{t|0}(\tx|\rvx_k)}{\sum_{j=1}^n p_{t|0}(\tx|\rvx_j)} }\nabla_{\rvx(t)}\log p_{t|0}(\tx|\rvx_k)\Big\Vert_2^2\right]. 
\end{multline*}
This means that with input $\tx$ and $t$, the model is optimized with
\begin{align*}
    \E_{\{\rvx_i\}_{i=1}^{n} \sim p_{0|t}(\cdot|\tx)p_0^{n-1}}
    \left[\Big\Vert \rvs_\theta(\rvx(t),t) - {\sum_{k=1}^n \frac{p_{t|0}(\tx|\rvx_k)}{\sum_{j=1}^n p_{t|0}(\tx|\rvx_j)} }\nabla_{\rvx(t)}\log p_{t|0}(\tx|\rvx_k)\Big\Vert_2^2\right]. 
\end{align*}
Taking its derivative w.r.t. $\rvs_\theta(\tx,t)$ results in
\begin{align*}
    \E_{\{\rvx_i\}_{i=1}^{n} \sim p_{0|t}(\cdot|\tx)p_0^{n-1}}
    \left[2\left(\rvs_\theta(\rvx(t),t) - {\sum_{k=1}^n \frac{p_{t|0}(\tx|\rvx_k)}{\sum_{j=1}^n p_{t|0}(\tx|\rvx_j)} }\nabla_{\rvx(t)}\log p_{t|0}(\tx|\rvx_k)\right)\right].
\end{align*}
Setting it to 0, we have
\begin{align*}
    \rvs_\theta(\rvx(t),t) - 
    \E_{\{\rvx_i\}_{i=1}^{n} \sim p_{0|t}(\cdot|\tx)p_0^{n-1}}
    \left[{\sum_{k=1}^n \frac{p_{t|0}(\tx|\rvx_k)}{\sum_{j=1}^n p_{t|0}(\tx|\rvx_j)} }\nabla_{\rvx(t)}\log p_{t|0}(\tx|\rvx_k)\right]=0.
\end{align*}
Thus, we arrive at the minimizer of the STF objective (\Eqref{eq:s_w_star}):
\begin{align*}
    \rvs_{\textrm{STF}}^*(\tx,t) = \E_{\rvx_1 \sim p_{0|t}(\cdot|\tx)}\E_{\{\rvx_i\}_{i=2}^{n} \sim p_0^{n-1}}\left[{\sum_{k=1}^n \frac{p_{t|0}(\tx|\rvx_k)}{\sum_{j} p_{t|0}(\tx|\rvx_j)} }\nabla_{\tx}\log p_{t|0}(\tx|\rvx_k)\right].
\end{align*}
}

\subsection{Proof for Theorem~\ref{thm:bias}}
\label{app:proof-bias}
\thmbias*
\begin{proof}
Recall that $\rvs_\textrm{STF}^*(\tx, t)$ is calculated via \Eqref{eq:s_w_star}. The mean of the denominator in the expectation $\frac{1}{n}\sum_{j=1}^n p_{t|0}(\tx|\rvx_j)$, for large $n$, approximate $\frac{1}{n-1}\sum_{j=2}^n p_{t|0}(\tx|\rvx_j)$, which in turn, $\frac{1}{n-1}\sum_{j=2}^n p_{t|0}(\tx|\rvx_j) \xrightarrow{p} p_t(\tx)$ by WLLN.

Similarly, for the mean of the remaining terms in the expectation, by CLT, we have
\begin{multline*}
    \sqrt{n}\left({\frac{1}{n}\sum_{k=1}^n {p_{t|0}(\tx|\rvx_k)}}\nabla_{\tx}\log p_{t|0}(\tx|\rvx_k)-p_t(\tx)\nabla_{\tx} \log p_t(\tx)\right) \\\xrightarrow{d} \gN\left(\bm{0},  \;\Cov(\nabla_{\tx} p_{t|0}(\tx|\rvx))\right)
\end{multline*}
Putting them together via Slutsky's theorem, we conclude the proof.
\end{proof}
\subsection{Proof for Theorem~\ref{thm:var}}
\label{sec:proof-thmvar}
\thmvar*
\begin{proof}

\textbf{Step 1: Make the likelihood weighting coefficients ``independent"}

We first apply Hoeffding's inequality for the set ${\{\rvx_i\}_{i=2}^{n} \sim p_0^{n-1}}$ to make the summation $\sum_{j=2}^n p_{t|0}(\tx|\rvx_j)$ concentrate to its expectation $(n-1)p_t(\rvx(t))$. Since $p_{t|0}(\tx|\rvx_j) \in (0,\frac{1}{(\sqrt{2\pi} \sigma_t)^d})$, we have
\begin{align*}
    \Pr\left[\left|\sum_{j=2}^n p_{t|0}(\tx|\rvx_j)-(n-1)p_t(\rvx(t))\right|\ge n^{\gamma_1}\right] \le 2e^{\frac{-2n^{2\gamma_1}(\sqrt{2\pi} \sigma_t)^d}{(n-1)}},
\end{align*}
$\forall \gamma_1 \in (\frac12,1)$.

Thus the summation can be re-expressed as:
\begin{align*}
    \sum_{j=2}^n p_{t|0}(\tx|\rvx_j)& = (1-O(2e^{\frac{-2n^{2\gamma_1}(\sqrt{2\pi} \sigma_t)^d}{(n-1)}}))[(n-1)p_t(\rvx(t))+O(n^{\gamma_1})]\\
    &\;\qquad + O(2e^{\frac{-2n^{2\gamma_1}(\sqrt{2\pi} \sigma_t)^d}{(n-1)}})O((n-1)\frac{1}{(\sqrt{2\pi} \sigma_t)^d})\\
    &=(n-1)p_t(\rvx(t))+O(ne^{{-2n^{2\gamma_1-1}(\sqrt{2\pi} \sigma_t)^d}})
\end{align*}
The coefficient for $\rvx_1$ is then
\begin{align*}
     \frac{p_{t|0}(\tx|\rvx_1)}{\sum_{j=1}^n p_{t|0}(\tx|\rvx_j)} &=  \frac{p_{t|0}(\tx|\rvx_1)}{p_{t|0}(\tx|\rvx_1)+\sum_{j=2}^n p_{t|0}(\tx|x_j)}\\
     &=\frac{p_{t|0}(\tx|\rvx_1)}{p_{t|0}(\tx|\rvx_1)+(n-1)p_t(\rvx(t))+O(ne^{{-2n^{2\gamma_1-1}(\sqrt{2\pi} \sigma_t)^d}})}\\
     &=O(\frac{1}{n})
\end{align*}
The coefficient for $x_k, k\in \{2,\dots, n\}$ is:
\begin{align*}
    \frac{p_{t|0}(\tx|\rvx_k)}{\sum_{j=1}^n p_{t|0}(\tx|\rvx_j)} 
     &= \frac{p_{t|0}(\tx|\rvx_k)}{(n-1)p_t(\rvx(t))+p_{t|0}(\tx|\rvx_1)+O(ne^{{-2n^{2\gamma_1-1}(\sqrt{2\pi} \sigma_t)^d}})}\\
     &= \frac{1}{(n-1)}\frac{p_{t|0}(\tx|\rvx_k)}{p_t(\rvx(t))} + O(\frac{1}{n^2}) +O(ne^{{-2n^{2\gamma_1-1}(\sqrt{2\pi} \sigma_t)^d}})
\end{align*}

\textbf{Step 2: Re-express the trace-of-covariance by the ``independent" weights}

Plugging in the above formulation of coefficients, we can rewrite the trace-of-covariance for the new target as:
\begin{align*}
    &\;V_{\textrm{STF}}(\rvx(t),t) \\
    = &\;\E_{\rvx_1 \sim p_{0|t}(\rvx|\tx)}\E_{\{\rvx_i\}_{i=2}^{n} \sim p^{n-1}(\rvx)}\|{\sum_{k=1}^n \frac{p_{t|0}(\tx|\rvx_k)}{\sum_{j=1}^n p_{t|0}(\tx|x_j)} }\nabla_{\rvx(t)}\log p_{t|0}(\tx|\rvx_k)  \\
    &\;\qquad- \E_{\rvx_1 \sim p_{0|t}(\rvx|\tx)}\E_{\{\rvx_i\}_{i=2}^{n} \sim p^{n-1}(\rvx)}{\sum_{k=1}^n \frac{p_{t|0}(\tx|\rvx_k)}{\sum_{j=1}^n p_{t|0}(\tx|\rvx_j)} }\nabla_{\rvx(t)}\log p_{t|0}(\tx|\rvx_k)\|_2^2\\
  \le  &\;\E_{\rvx_1 \sim p_{0|t}(\rvx|\tx)}\E_{\{\rvx_i\}_{i=2}^{n} \sim p^{n-1}(\rvx)}\|\sum_{k=2}^{n}\frac{1}{(n-1)}\frac{p_{t|0}(\tx|\rvx_k)}{p_t(\rvx(t))} \nabla_{\rvx(t)}\log p_{t|0}(\tx|\rvx_k)  \\
   &\;\qquad - \E_{\rvx_1 \sim p_{0|t}(\rvx|\tx)}\E_{\{\rvx_i\}_{i=2}^{n} \sim p^{n-1}(\rvx)}\sum_{k=2}^{n}\frac{1}{(n-1)}\frac{p_{t|0}(\tx|\rvx_k)}{p_t(\rvx(t))} \nabla_{\rvx(t)}\log p_{t|0}(\tx|\rvx_k)\\
  &\;\qquad+\sum_{k=2}^{n} O(\frac{1}{n^2})\nabla_{\rvx(t)}\log p_{t|0}(\tx|\rvx_k)  -  O(\frac{1}{n})\E_{p(\rvx)}[\nabla_{\rvx(t)}\log p_{t|0}(\tx|\rvx)]\|_2^2+O(\frac{1}{n^2})\\
  = &\;\frac{1}{(n-1)^2}\sum_{k=2}^n \Tr\left(\Cov_{\rvx_k\sim p(\rvx)}(\frac{p_{t|0}(\tx|\rvx_k)}{p_t(\rvx(t))} \nabla_{\rvx(t)}\log p_{t|0}(\tx|\rvx_k))\right)+O(\frac{1}{n^2})\\
  =&\;\frac{1}{(n-1)}\Tr\left(\Cov_{\rvx\sim p(\rvx)}(\frac{p_{t|0}(\tx|\rvx)}{p_t(\rvx(t))} \nabla_{\rvx(t)}\log p_{t|0}(\tx|\rvx))\right)+ O(\frac{1}{n^2})\numberthis \label{eq:vw1}
\end{align*}

\textbf{Step 3: Upper bound the new trace-of-covariance term}

Next, we examine the new trace-of-covariance term:
\begin{align*}
    &\;\Tr\left(\Cov_{\rvx\sim p_0(\rvx)}(\frac{p_{t|0}(\tx|\rvx)}{p_t(\rvx(t))} \nabla_{\rvx(t)}\log p_{t|0}(\tx|\rvx))\right) \\
    = &\;\sum_{i=1}^d \E_{p_0(\rvx)}\left[(\frac{p_{t|0}(\tx|\rvx)}{p_t(\rvx(t))} \nabla_{\rvx(t)}\log p_{t|0}(\tx|\rvx))_i)^2\right]\\ &\; \qquad- \left(\E_{p_0(\rvx)}\left[\frac{p_{t|0}(\tx|\rvx)}{p_t(\rvx(t))} \nabla_{\rvx(t)}\log p_{t|0}(\tx|\rvx))_i)\right]\right)^2\\
    = &\;\sum_{i=1}^d \E_{p_0(\rvx)}\left[(\frac{p_{0|t}(\rvx|\tx)}{p_0(\rvx)} \nabla_{\rvx(t)}\log p_{t|0}(\tx|\rvx))_i)^2\right]\\ &\; \qquad- \left(\E_{p_0(\rvx)}\left[\frac{p_{0|t}(\rvx|\tx)}{p_0(\tx)} \nabla_{\rvx(t)}\log p_{t|0}(\tx|\rvx))_i)\right]\right)^2\\
    =&\; \sum_{i=1}^d \E_{p_0(\rvx)}\left[(\frac{p_{0|t}(\rvx|\tx)}{p_0(\rvx)} \nabla_{\rvx(t)}\log p_{t|0}(\tx|\rvx))_i)^2\right] - (\nabla_{\rvx(t)}\log p_{t|0}(\tx)_i)^2\\
    =&\; \sum_{i=1}^d \E_{p_0(\rvx)}\left[(\frac{p_{0|t}(\rvx|\tx)}{p_0(\rvx)} \nabla_{\rvx(t)}\log p_{t|0}(\tx|\rvx))_i)^2\right] - (\nabla_{\rvx(t)}\log p_{t|0}(\tx)_i)^2 \\
    &\; \qquad+ \E_{p_{0|t}(\rvx|\tx)}\left[ (\nabla_{\rvx(t)}\log p_{t|0}(\tx|\rvx)_i)^2\right] - \E_{p_{0|t}(\rvx|\tx)}\left[ (\nabla_{\rvx(t)}\log p_{t|0}(\tx|\rvx)_i)^2\right]\\
    =&\;V_{\textrm{DSM}}(\rvx(t), t) +\sum_{i=1}^d \E_{p_{0|t}}\left[(\frac{p_{0|t}(\rvx|\tx)}{p(\rvx)}-1) \nabla_{\rvx(t)}\log p_{t|0}(\tx|\rvx)_i^2\right]\\
    \le&\; V_{\textrm{DSM}}(\rvx(t), t) +\sum_{i=1}^d \sqrt{\E_{p_{0|t}}\left[(\frac{p_{0|t}(\rvx|\tx)}{p(\rvx)}-1)^2\right]\E_{p_{0|t}}\left[ \nabla_{\rvx(t)}\log p_{t|0}(\tx|\rvx)_i^4\right]} \numberthis\label{eq:vw3}\\
\end{align*}
We can further upper bound the trace-of-covariance term in \Eqref{eq:vw1}:
\begin{align*}
    &\;V_{\textrm{STF}}(\rvx(t),t) \\
  = &\;\frac{1}{(n-1)}\Tr(\Cov_{\rvx\sim p(\rvx)}(\frac{p_{t|0}(\tx|\rvx)}{p_t(\rvx(t))} \nabla_{\rvx(t)}\log p_{t|0}(\tx|\rvx)))+ O(\frac{1}{n^2})\\
  \le &\; \frac{1}{n-1}\left(V_{\textrm{DSM}}(\rvx(t), t) +\sum_{i=1}^d \sqrt{\E_{p_{0|t}}\left[(\frac{p_{0|t}(\rvx|\tx)}{p(\rvx)}-1)^2\right]\E_{p_{0|t}}\left[ \nabla_{\rvx(t)}\log p_{t|0}(\tx|\rvx)_i^4\right]}\right)\\
    &\;\qquad + O(\frac{1}{n^2})
\end{align*}
Taking the expectation w.r.t $p_t(\rvx(t))$ for both sides, we get
\begin{align*}
    &\;V_{\textrm{STF}}(t) \\
  \le &\; \E_{p_t(\rvx(t))}\biggl[\frac{1}{n-1}\biggl(V_{\textrm{DSM}}(\rvx(t), t) \\
    &\;\qquad +\sum_{i=1}^d \sqrt{\E_{p_{0|t}}\left[(\frac{p_{0|t}(\rvx|\tx)}{p(\rvx)}-1)^2\right]\E_{p_{0|t}}\left[ \nabla_{\rvx(t)}\log p_{t|0}(\tx|\rvx)_i^4\right]}\biggr) + O(\frac{1}{n^2})\biggr]\\
  \le &\; \frac{1}{n-1}\left(V_{\textrm{DSM}}(t) +\sum_{i=1}^d \E_{p_t(\rvx(t))}\sqrt{\E_{p_{0|t}}\left[(\frac{p_{0|t}(\rvx|\tx)}{p(\rvx)}-1)^2\right]\E_{p_{0|t}}\left[ \nabla_{\rvx(t)}\log p_{t|0}(\tx|\rvx)_i^4\right]}\right)\\
    &\;\qquad + O(\frac{1}{n^2})\\
  \le &\; \frac{1}{n-1}\left(V_{\textrm{DSM}}(t)  + \sum_{i=1}^d \sqrt{\E_{p_{t}(\tx)}{D_{f}\left({p_0(\rvx)\parallel p_{0|t}(\rvx|\tx)}\right)}}\sqrt{\E_{p_{0,t}}\left[ \nabla_{\rvx(t)}\log p_{t|0}(\tx|\rvx)_i^4\right]}\right)\\
    &\;\qquad + O(\frac{1}{n^2}) \tag{Concavity of $x^{\frac12}$, Cauchy's inequality}\\
  \le &\; \frac{1}{n-1}\left(V_{\textrm{DSM}}(t)  + d\sqrt{\E_{z\sim \gN(0,\sigma^2)}[\frac{z^4}{\sigma_t^8}]}\sqrt{\E_{p_{t}(\tx)}{D_{f}\left({p_0(\rvx)\parallel p_{0|t}(\rvx|\tx)}\right)}}\right) + O(\frac{1}{n^2}) \\
  \le &\; \frac{1}{n-1}\left(V_{\textrm{DSM}}(t)  + \frac{\sqrt{3}d}{\sigma_t^2}\sqrt{\E_{p_{t}(\tx)}{D_{f}\left({p_0(\rvx)\parallel p_{0|t}(\rvx|\tx)}\right)}}\right) + O(\frac{1}{n^2}) 
\end{align*}
where $D_{f}$ is an $f$-divergence with $f(y)=\begin{cases}(1/y-1)^2 & (y<1.5)\\ 8y/27-1/3 & (y\ge 1.5)\end{cases}$. Note that we choose this particular form of $f(y)$ since it is the convex function with the tightest upper bound on $(\frac{1}{y}-1)^2$. 
\end{proof}

\section{Details for the behavior of $V_{\textrm{DSM}}(t)$}
\label{app:vt}

In \Secref{sec:three}, we demonstrate the behavior of $V_{\textrm{DSM}}(t)$ in the three phases on Two Gaussians and a subset of CIFAR-10. Here we provide more details about the two datasets.

The distribution of the two Gaussian is $\frac{1}{2}\gN(\vmu, \hat{\sigma}^2\mI_{64\times64}) + \frac{1}{2} \gN(-\vmu, \hat{\sigma}^2\mI_{64\times64})$, where $\vmu = $ $0.1\cdot\bf{1} \in \mathbb{R}^{64}$, and $\hat{\sigma}=1e-4$. We estimate all the integrals in \Eqref{eq:vt} by sampling 1k points from the corresponding distributions. For the subset of CIFAR-10, we uniformly sample 4096 images from CIFAR-10 dataset, and assign uniform distribution on the discrete set. We also approximate $V_{\textrm{DSM}}(t)$ by Monte Carlo estimation and sample $200$ perturbations for each $t$. We use VE SDE for all simulations, and set $\sigma_m = 1e-2, \sigma_M = 50$.

Interestingly, $V_{\textrm{DSM}}(t)$ is relatively large for the Two Gaussians distribution compared to CIFAR-10~(see \Figref{fig:3p-vt}) when $t\to 0$. This can be explained by their continuous and discrete natures. For the two Gaussian distribution, we can rewrite $V_{\textrm{DSM}}(t)$ as
\begin{align*}
    V_{\textrm{DSM}}(t) = \E_{\tx\sim p_t(\tx)}\left[\frac{\hat{\sigma}^2}{\sigma_t^2(\sigma_t^2 + \hat{\sigma}^2)} + 4\frac{\alpha(\tx)(1-\alpha(\tx))\|\vmu\|\sigma_t^4}{\sigma_t^4+\hat{\sigma}^2}\right]
\end{align*}
where $\alpha(\rvx) = \frac{1}{1+\exp^{-4\rvx^T\vmu}}$ can be regarded as the probability that $\rvx$ comes from the Gaussian component $\gN(\vmu, \hat{\sigma}^2\mI_{64\times64})$. When $t\to 0$, obviously $\alpha(\tx)(1-\alpha(\tx))\to 0$, and the term $\frac{\alpha(\tx)(1-\alpha(\tx))\|\vmu\|\sigma_t^4}{\sigma_t^4+\hat{\sigma}^2}$ vanishes. Hence $\lim_{t\to 0}V_{\textrm{DSM}}(t)\approx \lim_{t\to 0}\frac{\hat{\sigma}^2}{\sigma_t^2(\sigma_t^2 + \hat{\sigma}^2)} = \frac{\hat{\sigma}^2}{\sigma_m^2(\sigma_m^2 + \hat{\sigma}^2)}$, which can not be neglected when $\sigma_m$ is small. On the other hand, we can effectively view the 4069 discrete samples as a mixture of 4096 0-variance Gaussians, \ie $\hat{\sigma}=0$. Thus by similar reasoning we could see $\lim_{t\to 0}V_{\textrm{DSM}}(t)\approx \lim_{t\to 0}\frac{\hat{\sigma}^2}{\sigma_t^2(\sigma_t^2 + \hat{\sigma}^2)}=0$.

\section{Experimental Details}
\label{app:exp}
In this section, we include more details about the training and sampling of score-based models by the STF and DSM objectives. All the experiments are run on two NVIDIA A100 GPUs.
\subsection{Training}
\label{app:exp-training}

We consider the CIFAR-10 and CelebA $64^2$ in image generation tasks. Following \citet{Song2020ImprovedTF}, we first center-crop the CelebA images and then resize them to $64\times 64$. For VE/VP, we use the same set of hyper-parameters and the NCSN++/DDPM++ backbones and the continuous-time training objectives for forward SDEs in \citet{song2021scorebased}. {For EDM, we adopt the improved hyper-parameters and architectures for NCSN++ in \citet{karras2022elucidating}. We set the reference batch size $n$ to 1024 on CIFAR-10, 1024 on CelebA $64^2$. The training iteration is 1.3M on CIFAR-10 and 1M on CelebA $64^2$ for VE/VP, and 200M images for EDM~\citep{karras2022elucidating}. The small batch size $\gB$ in Algorithm~\ref{alg:tf} is the same as the batch size in the baseline score-based methods. For model selection, we pick the checkpoint with the lowest FID per 50k iterations on 10k samples for computing all the scores, as in \citet{song2021scorebased} for VE/VP, and per 2.5M images on 50k samples as in \citet{karras2022elucidating} for EDM.}

To measure the stability of converged VE models, we repeat the experiment 3 times on CIFAR-10 for DSM and STF objectives, using different random seeds. 

We quantitatively study the training overhead of STF. All the numbers are measured on two NVIDIA A100 GPUs. In Table~\ref{table:time} and Table~\ref{table:time-edm}, we report the wall-clock training time~(s) per 50 iterations/50k images on VE/EDM. We can see that the STF introduces additional overhead after incorporating the large reference batch. Since the calculation of the mini-batch target does not involve neural networks, the STF does not take significantly longer training time. Indeed, in \Secref{sec:exp-train} we show that STF achieves comparable or better performance within a shorter training time.

\begin{table}[htbp]
\begin{center}
\caption{Wall-clock training time~(s) per 50 iterations on VE with NCSN++~\citep{song2021scorebased}}
\label{table:time}
\begin{tabular}{c c c c c c c c}
		\toprule
		\textbf{Dataset-Method} &  CIFAR-10 - DSM &CIFAR-10 - STF & CelebA - DSM & CelebA - STF\\
		\midrule
        \textbf{Wall-clock time} &  $13$ &$16$  &$24$ & $26$ \\
        \bottomrule
\end{tabular}
\end{center}
\end{table}

\begin{table}[htbp]
\begin{center}
\caption{Wall-clock training time~(s) per 50k images on EDM with improved NCSN++~\citep{karras2022elucidating}}
\label{table:time-edm}
\begin{tabular}{c c c c c c}
		\toprule
		\textbf{Dataset-Method} &  CIFAR-10 - DSM &CIFAR-10 - STF\\
		\midrule
        \textbf{Wall-clock time} &  $98.5$ &$101.5$  \\
        \textbf{Memory per GPU (G)} &  $36.05$ &$40.64$  \\
        \bottomrule
\end{tabular}
\end{center}
\end{table}

\subsection{Sampling}
\label{app:exp-sampling}

We adopt the RK45 method for the backward ODE sampling of VE, and the DDIM sampler~\citep{Song2021DenoisingDI} for VP. For RK45 sampler of VE, we use the function implemented in \texttt{scipy.integrate.solve\_ivp} with the tolerances \texttt{atol}=$1e-5/1e-4$, \texttt{rtol}=$1e-5/1e-4$ for CIFAR-10/CelebA $64^2$.
As in \citet{song2021scorebased}, we set the terminal time to $1e-5/1e-3$ for VE/VP. {For EDM, we adopt Heun’s $2^{\textrm{nd}}$ order method and the discretization scheme in \citep{karras2022elucidating}, with 35 NFE.}

We use the predictor-corrector~(PC) sampler for reverse-time SDE. We follow \citet{song2021scorebased} to set the Euler-Maruyama method as the predictor and the Langevin dynamics (MCMC) as the corrector.

\subsection{Evaluations}

For the evaluation, we compute the Fréchet distance between 50000 samples and the pre-computed statistics of CIFAR-10. For CelebA $64^2$, we adopt the setting in \citet{Song2020ImprovedTF} where the distance is computed between 10000 samples and the test set. 

\section{Extra Experiments}

\subsection{{Stability of converged models}}
\label{app:std}

In Table~\ref{tab:cifar10-std}, we report the sample quality measured by FID/Inception score, and their standard deviations across random seeds on CIFAR-10. We can see that models trained with STF objective have lower variations of their final performances, in most cases. In particular, the standard deviation decreases from $4.41$ to $0.06$ for RK45 sampler on VE. It suggests that the STF objective can stabilize the performance of converged models.

\begin{table}[h]
    \centering
    \caption{CIFAR-10 sample quality~(FID, Inception) and number of function evaluation~(NFE), with standard deviation.}
    \begin{tabular}{c c c c}
    \toprule
    Methods & Inception $\uparrow$  &FID $\downarrow$ & NFE $\downarrow$ \\
    \midrule
    \textit{\textbf{RK45 method (ODE)}}\\
    \midrule
        VE (DSM)& $9.27\pm 0.12$ &$8.90\pm 4.41$& $264\pm 52$\\
        VE (STF) & $\bm{9.52\pm 0.06}$& $\bm{5.51 \pm 0.06}$& $200\pm 8$\\
    \midrule
    \textit{\textbf{PC sampler (SDE)}}\\
    \midrule
        VE (DSM) & $9.68\pm 0.12$ & $2.75\pm 0.13$& 2000\\
        VE (STF) &$\bm{9.86 \pm 0.05}$ & $\bm{2.66 \pm 0.13}$& 2000\\
    \bottomrule
    \end{tabular}
    \label{tab:cifar10-std}
\end{table}

\subsection{{Effects of step size}}
\label{app:rk45}

In \Figref{fig:fid_vs_nfe}, we show the FID scores with the number of function evaluations of ODE samplers on
CIFAR-10 and CelebA $64^2$
. To vary the NFE, we adjust the error tolerance in the RK45 method. The
sample quality of the STF objective degrades gracefully when decreasing the NFE. The STF objective
consistently outperforms the DSM one for all NFEs on CIFAR-10, and largely improves over the baseline when setting the tolerance to $5e-3$ on CelebA $64^2$
. It suggests that the STF has greater robustness to different step sizes.

\begin{figure*}[htbp]
\centering
{\includegraphics[width=0.4\textwidth]{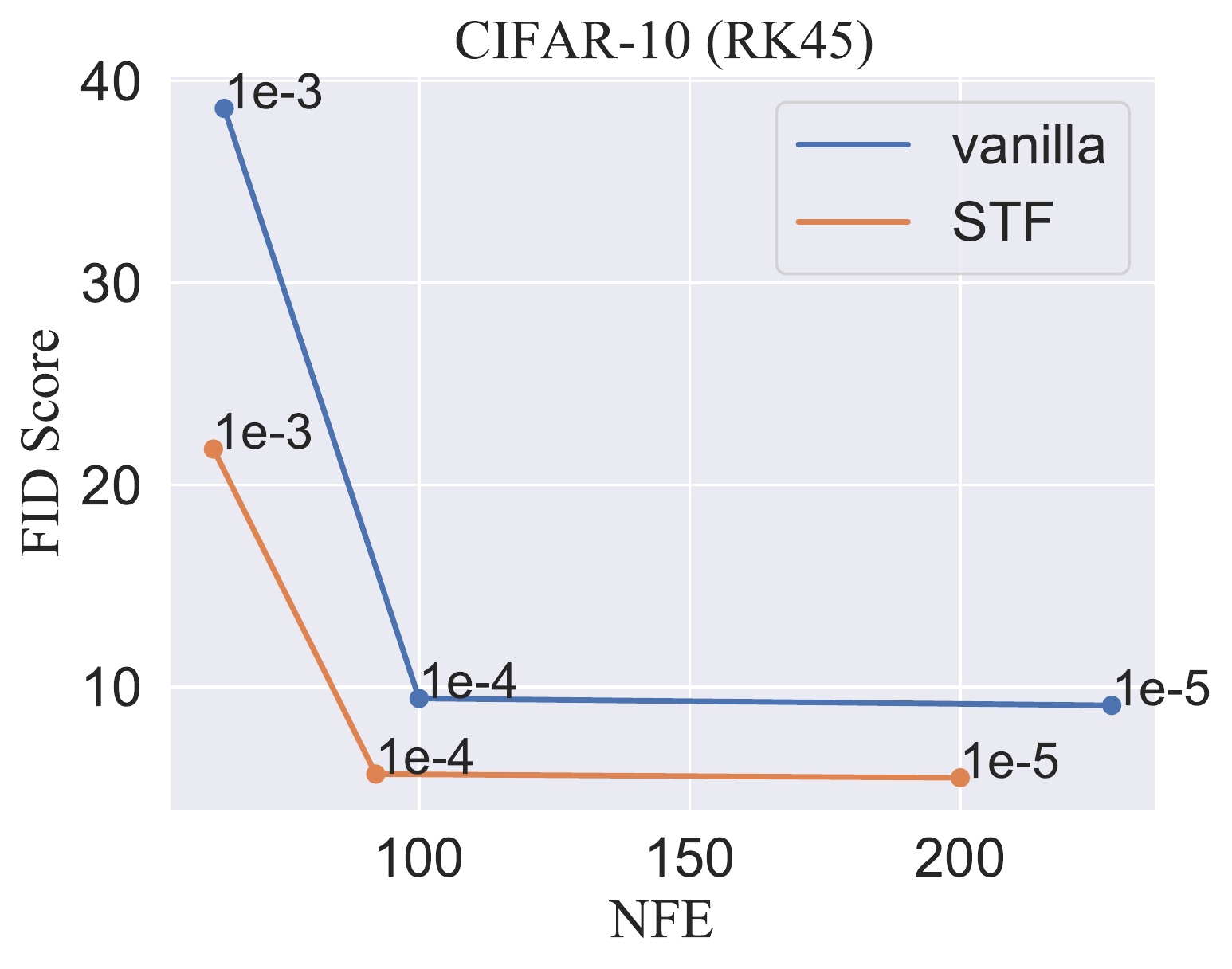}}
{\includegraphics[width=0.4\textwidth]{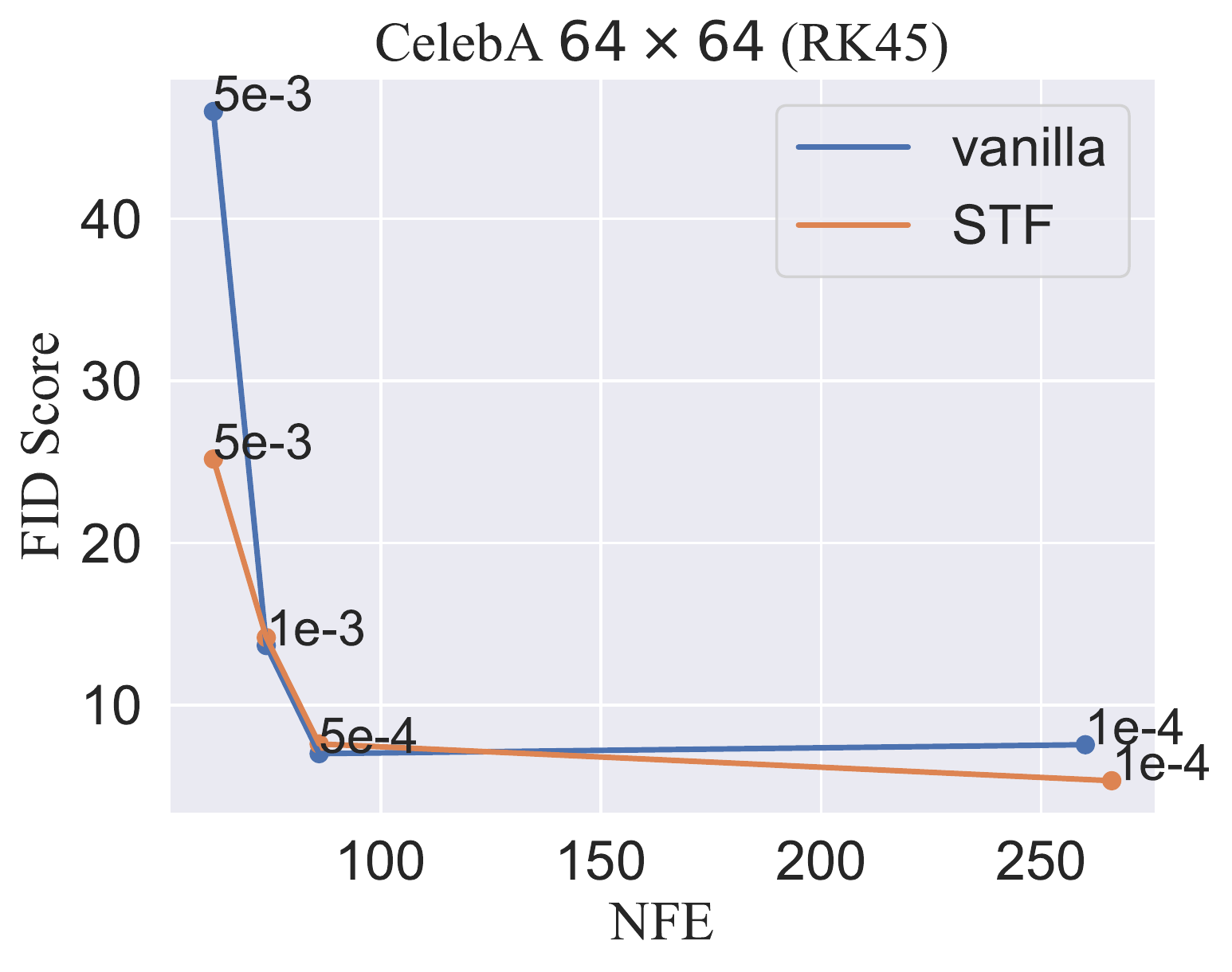}}
\vspace{-5pt}
\caption{FID versus NFE using 
ODE samplers on CIFAR-10~(left) and CelebA $64\times 64$~(right)}
\vspace{5pt}
\label{fig:fid_vs_nfe}
\end{figure*}

\section{Extended Samples}
\label{app:extended}

We provide extended samples from score-based models trained by DSM/STF objective on CIFAR-10 and CelebA $64^2$ by ODE samplers. For systematic comparison, we visualize samples from models trained on different seeds. We also provide samples generated by the state-of-the-art model --- STF with EDM framework.

\subsection{CIFAR-10}

In \Figref{fig:vis-seed-cifar}, we visualize the samples produced by different methods across random seeds for VE. We use the RK45 sampler for sampling. We observe that the model trained by the DSM objective can produce noisy images~(in red boxes), and the image quality has great variability across different random seeds. In contrast, models trained by STF objective generate clean and consistent samples with varying random seeds.

{In \Figref{fig:stf-edm}, we further provide samples from a model trained by STF under the EDM framework~\citep{karras2022elucidating}. The model is the current state-of-the-art on the unconditional CIFAR-10 generation task.}


\begin{figure*}
\centering
\subfigure[DSM-1]{\includegraphics[width=0.45\textwidth]{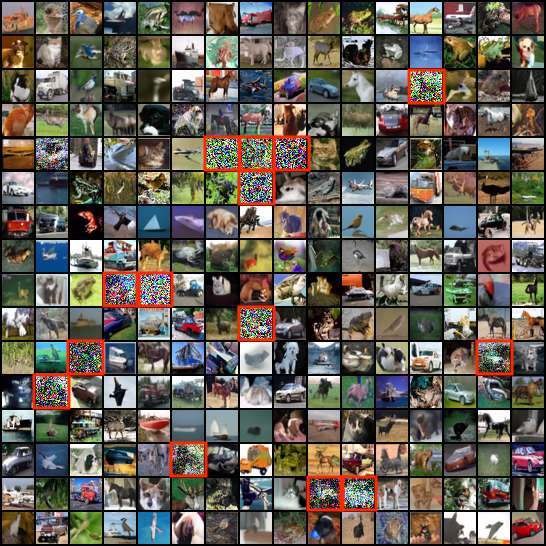}}
\subfigure[STF-1]{\includegraphics[width=0.45\textwidth]{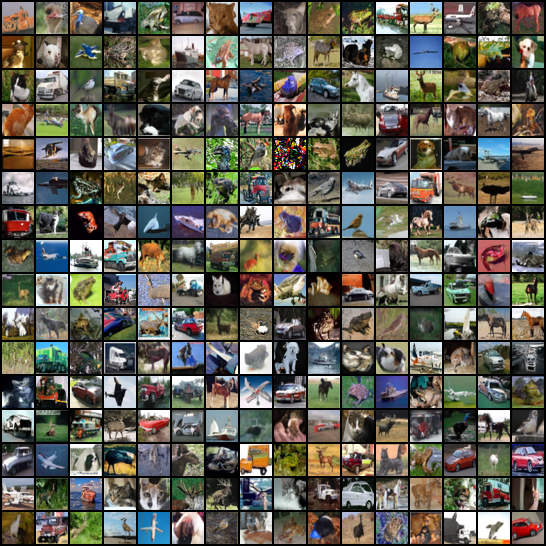}}
\subfigure[DSM-2]{\includegraphics[width=0.45\textwidth]{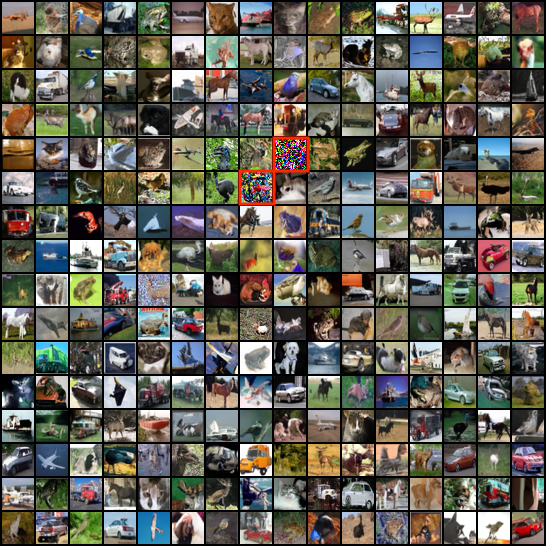}}
\subfigure[STF-2]{\includegraphics[width=0.45\textwidth]{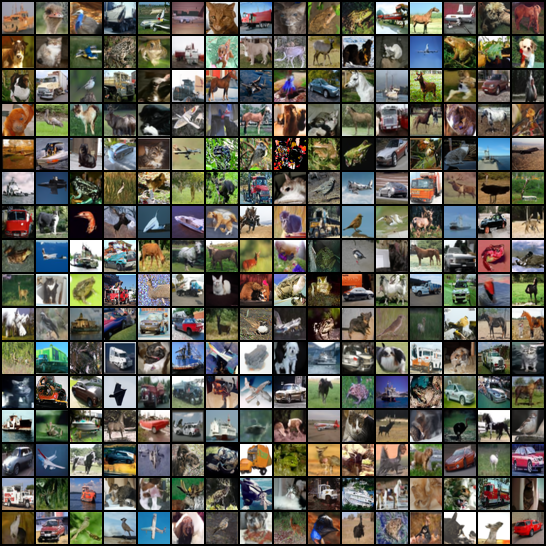}}
\subfigure[DSM-3]{\includegraphics[width=0.45\textwidth]{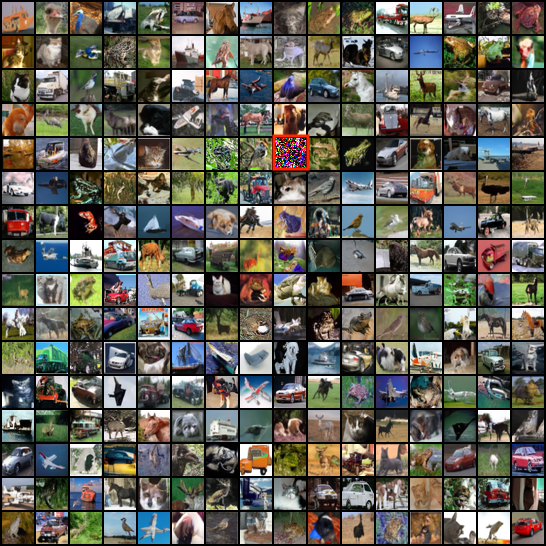}}
\subfigure[STF-3]{\includegraphics[width=0.45\textwidth]{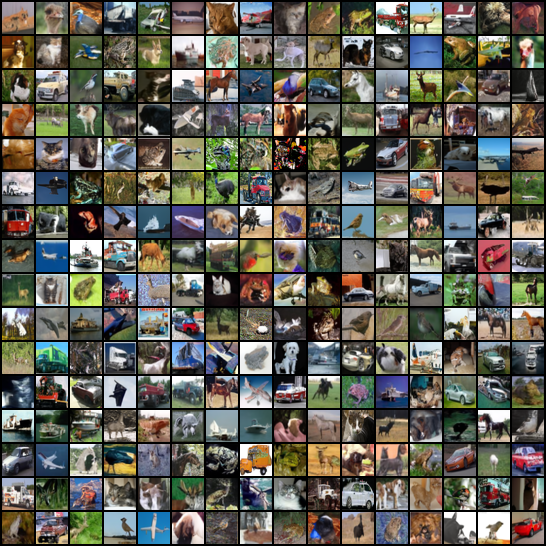}}
\caption{Samples generated by three different final models of DSM (left column) and STF (right column) on CIFAR-10. \textcolor{red}{Red boxes} indicate noisy images.}
\label{fig:vis-seed-cifar}
\end{figure*}

\begin{figure*}
    \centering
    \includegraphics[width=0.9\textwidth]{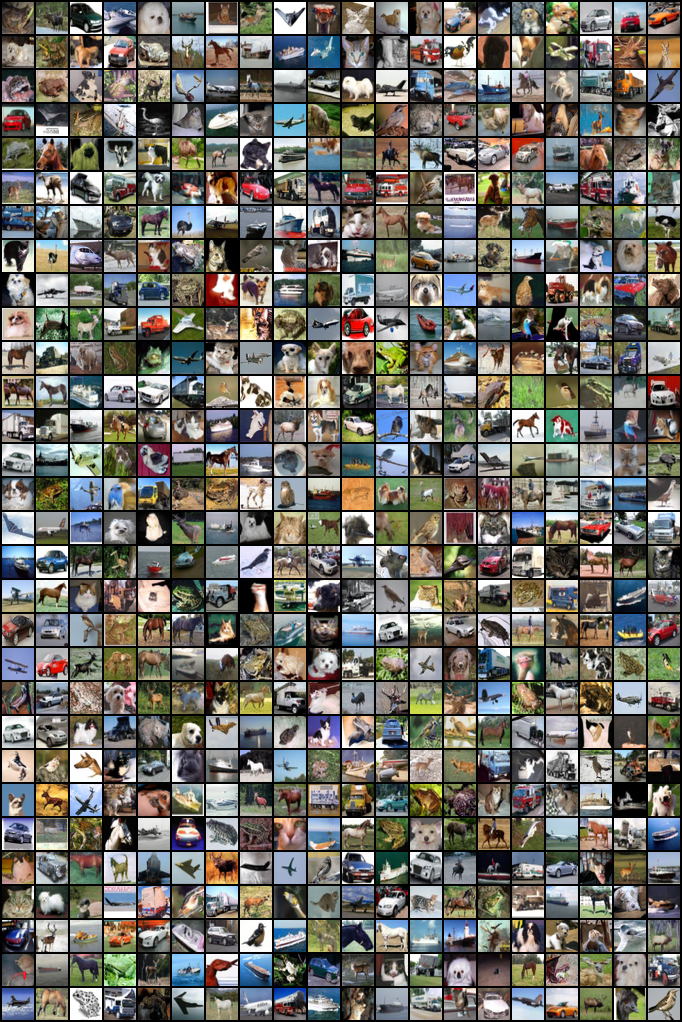}
    \caption{CIFAR-10 samples from STF using EDM model. The FID is $1.90$ and NFE is $35$.}
    \label{fig:stf-edm}
\end{figure*}

\subsection{CelebA $64\times 64$}

In \Figref{fig:vis-seed-celeba}, we provide samples from models trained on DSM and STF objectives with VE. 
\begin{figure*}
\centering
\subfigure[DSM]{\includegraphics[width=0.7\textwidth]{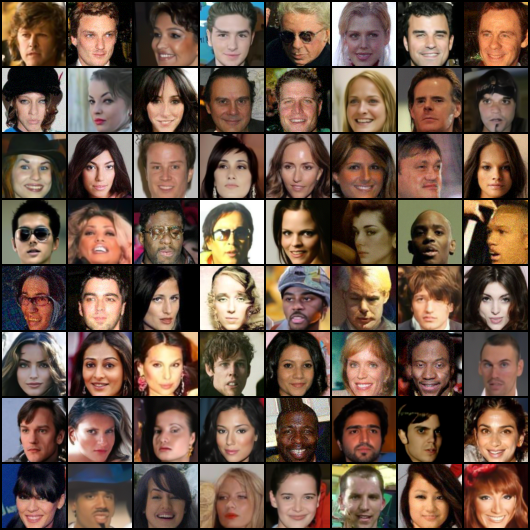}}
\subfigure[STF]{\includegraphics[width=0.7\textwidth]{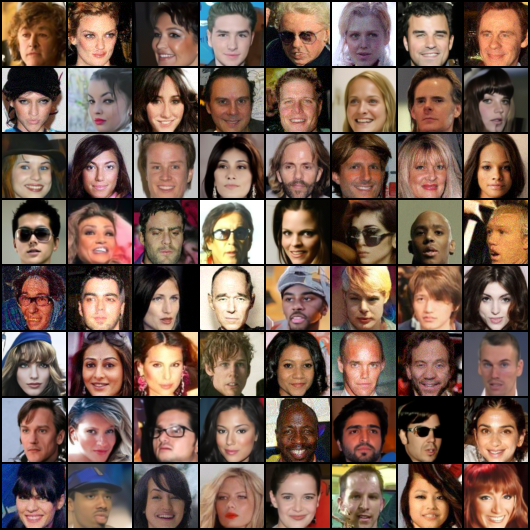}}
\caption{Samples generated by models trained on DSM (top) and STF (bottom) on CelebA $64^2$ with VE.}
\label{fig:vis-seed-celeba}
\end{figure*}



\end{document}